\newtheorem{theorem}{Theorem}%[section]
\newtheorem{lemma}[theorem]{Lemma}
\newtheorem{corollary}[theorem]{Corollary}
\newtheorem{definition}[theorem]{Definition}
\newcommand{\cO}{\mathcal{O}}
\newcommand{\cD}{\mathcal{D}}
\newcommand{\cX}{\mathcal{X}}
\newcommand{\E}{\mathbb{E}}
\title{Randomized Exploration for Non-Stationary Stochastic Linear Bandits}
\author{ {\bf Baekjin Kim}\thanks{Please note that this is a corrected version of the paper originally published in UAI 2020. }\\
Department of Statistics \\
University of Michigan\\
Ann Arbor, MI 48109 \\
\And
 {\bf Ambuj Tewari}\\
Department of Statistics \\
University of Michigan\\
Ann Arbor, MI 48109 
}
\begin{document}

% \ifarxiv
% \title{\mytitle}
% \author{Baekjin Kim and Ambuj Tewari\\
% {\tt \{baekjin,tewaria\}@umich.edu}}
% \maketitle
% \fi

\maketitle
% \onecolumn
\begin{abstract}
We investigate two perturbation approaches to overcome conservatism that optimism based algorithms chronically suffer from in practice. The first approach replaces optimism with a simple randomization when using confidence sets. The second one adds random perturbations to its current estimate before maximizing the expected reward. For non-stationary linear bandits, where each action is associated with a $d$-dimensional feature and the unknown parameter is time-varying with total variation $B_T$, we propose two randomized algorithms, Discounted Randomized LinUCB (D-RandLinUCB) and Discounted Linear Thompson Sampling (D-LinTS) via the two perturbation approaches. We highlight the statistical optimality versus computational efficiency trade-off between them in that the former asymptotically achieves the optimal dynamic regret $\tilde{O}(d^{7/8} B_T^{1/4}T^{3/4})$, but the latter is oracle-efficient with an extra logarithmic factor in the number of arms compared to minimax-optimal dynamic regret. In a simulation study, both algorithms show outstanding performance in tackling conservatism issue that Discounted LinUCB struggles with. 
% When the high dimension and large set of actions are considered, in particular, D-LinTS performs as well as linear Thompson Sampling with prior information on change-point.
%dynamic regret bound that is $(\log K)^{1/3}$ worse than minimax-optimal regret where $K$ is number of arms. 
% Furthermore, both of them empirically perform better than the non-randomized benchmark, Discounted LinUCB (D-LinUCB) in every scenario.
% We investigate the design principles of two randomized algorithms that enjoy either statistical optimality by replacing optimism with a simple randomization when deciding a confidence level, or computational efficiency by adding the random perturbations to estimates where the learner has access to an offline optimization oracle, well-known as Hannan's perturbation approach.
\end{abstract}

\section{INTRODUCTION}
A multi-armed bandit is the simplest model of decision making that involves the exploration versus exploitation trade-off \citep{lai1985asymptotically}. Linear bandits are an extension of multi-armed bandits where the reward has linear structure with a finite-dimensional feature associated with each arm \citep{abe2003reinforcement, dani2008stochastic}. Two standard exploration strategies in stochastic linear bandits are Upper Confidence Bound algorithm (LinUCB) \citep{abbasi2011improved} and Linear Thomson Sampling (LinTS) \citep{agrawal2013thompson}. The former relies on optimism in face of uncertainty and is a deterministic algorithm built upon the construction of a high-probability confidence ellipsoid for the unknown parameter vector. The latter is a Bayesian solution that maximizes the expected rewards according to a parameter sampled from the posterior distribution. \citet{chapelle2011empirical} showed that Linear Thompson Sampling empirically performs better and is more robust to corrupted or delayed feedback than LinUCB. From a theoretical perspective, it enjoys a regret bound that is a factor of $\sqrt{d}$ worse than minimax-optimal regret bound $\tilde{\Theta}(d\sqrt{T})$ that LinUCB enjoys. However, the minimax optimality of optimism comes at a cost: implementing UCB type algorithms can lead to NP-hard optimization problems even for convex action sets \citep{agrawal2019recent}.

Random perturbation methods were originally proposed in the 1950s by \citet{hannan1957approximation} in the full information setting where losses of all actions are observed.  \citet{kalai2005efficient} showed Hannan's perturbation approach leads to efficient algorithms by making repeated calls to an offline optimization oracle. They also gave a new name to this family of randomized algorithms: Follow the Perturbed Leader (FTPL). Recent works \citep{abernethy2014online, abernethy2015fighting, kim2019optimality} have studied the relationship between FTPL and Follow the Regularized Leader (FTRL) algorithms and also investigated whether FTPL algorithms achieve minimax-optimal regret in full and partial information settings.

\citet{abeille2017linear} viewed Linear Thompson Sampling as a perturbation based algorithm, characterized a family of perturbations whose regrets can be analyzed, and raised an open problem to find a minimax-optimal perturbation. In addition to its significant role in smartly balancing exploration with exploitation, a perturbation based approach to linear bandits also reduces the problem to one call to the offline optimization oracle in each round. Recent works \citep{kveton2019perturbed, kveton2019randomized} have proposed randomized algorithms that use perturbation as a means to achieve oracle-efficient computation as well as better theoretical guarantee than LinTS, but there is still a gap between their regret bounds and the lower bound of $\Omega(d\sqrt{T})$. This gap is logarithmic in the number of actions which can introduce extra dependence on dimension for large action spaces.

A new randomized exploration scheme was proposed in the recent work of \citet{vaswani2019old}. In contrast to Hannan's perturbation approach that injects perturbation directly into an estimate, they replace optimism with random perturbation when using confidence sets for action selection in optimism based algorithms. This approach can be broadly applied to multi-armed bandit and structured bandit problems, and the resulting algorithms are theoretically optimal and empirically perform well since overall conservatism of optimism based algorithms can be tackled by randomizing the confidence level.

Linear bandit problems were originally motivated by applications such as online ad placement with features extracted from the ads and website users. However, users' preferences often evolve with time, which leads to interest in the non-stationary variant of linear bandits. Accordingly, adaptive algorithms that accommodate time-variation of environments have been studied in a rich line of works in both multi-armed bandit \citep{besbes2014stochastic} and linear bandit. With prior information of total variation budget, SW-LinUCB \citep{cheung2019hedging}, D-LinUCB \citep{russac2019weighted}, and Restart-LinUCB \citep{zhao2020simple} were constructed on the basis of the optimism in face of uncertainty principle via sliding window, exponential discounting weights and restarting, respectively. Recently, \citet{zhao2021non} discovered a technical mistake shared in three prior works and presented a fix which deteriorates their dynamic regret bounds from $\tilde{O}(d^{2/3} B_T^{1/3}T^{2/3})$ to $\tilde{O}(d^{7/8} B_T^{1/4}T^{3/4})$. In addition, \citet{luo2017efficient} and \citet{chen2019new} studied fully adaptive and oracle-efficient algorithms assuming access to an optimization oracle when total variation is unknown for the learner. It is still open problem to design a practically simple, oracle-efficient and statistically optimal algorithm for non-stationary linear bandits.

\subsection{CONTRIBUTION}
In Section \ref{sec:stationary}, we explicate, in the simpler stationary setting, the role of two perturbation approaches in overcoming conservatism that UCB-type algorithms chronically suffer from in practice. In one approach, we replace optimism with a simple randomization when using confidence sets. In the other, we add random perturbations to the current estimate before maximizing the expected reward. These two approaches result in Randomized LinUCB and Gaussian Linear Thompson Sampling for stationary linear bandits. We highlight the statistical optimality versus oracle efficiency trade-off between them.
% design principles of two randomized algorithms that enjoy either statistical optimality (in terms of regret) by replacing optimism with a simple randomization when deciding a confidence level, or computational efficiency induced by Hannan's perturbation approach where the learner has access to an offline optimization oracle. 
% we consider stationary stochastic linear bandits and study a new interpretation of Gaussian Linear Thompson Sampling (Gaususian-LinTS) as decoupled Randomized LinUCB (RandLinUCB), and how the optimality-efficiency trade-off arise in LinTS and RandLinUCB according to the coupling scheme of perturbations.
% (1) achieves $\tilde{\mathcal{O}}(d\sqrt{T\log K})$ regret where $K$ is number of arms in the largest action space, (2) can be efficiently implemented given oracle access to the offline optimization problem, and (3) is viewed as both Linear Thompson Sampling (LinTS) and decoupled Randomized LinUCB (RandLinUCB). 

In Section \ref{sec:non}, we study the non-stationary environment and present two randomized algorithms with exponential discounting weights, Discounted Randomized LinUCB (D-RandLinUCB) and Discounted Linear Thompson Sampling (D-LinTS) to gracefully adjust to the time-variation in the true parameter. We explain the trade-off between statistical optimality and oracle efficiency in that the former asymptotically achieves the optimal dynamic regret $\tilde{O}(d^{7/8} B_T^{1/4}T^{3/4})$, but the latter enjoys computational efficiency due to sole reliance on an offline optimization oracle for large or infinite action set. However it incurs an extra $(\log K)^{3/8}$ gap in its dynamic regret bound, where $K$ is the number of actions.

In Section \ref{sec:exp}, we run multiple simulation studies based on Criteo live traffic data \citep{diemert2017attribution} to evaluate the empirical performances of D-RandLinUCB and D-LinTS. We observe that when high dimension and a large set of actions are considered, the two show outstanding performance in tackling conservatism issue that the non-randomized D-LinUCB struggles with.
% and . compared to the non-randomized benchmark, D-LinUCB in simulation study based on Criteo live traffic data \citep{diemert2017attribution}. compared to the non-randomized benchmark, real dataset where the high dimension and large set of actions are considered. and SW-LinUCB \citep{cheung2019hedging} 

\section{WARM-UP: STATIONARY STOCHASTIC LINEAR BANDIT}\label{sec:stationary}
\subsection{PRELIMINARIES}
In stationary stochastic linear bandit, a learner chooses an action $X_t$ from a given action set $\cX_t \subset \mathbb{R}^d$ in every round $t$, and he subsequently observes a reward $Y_t = \langle X_t, \theta^{\star} \rangle +\eta_t$ where $\theta^{\star} \in \mathbb{R}^d$ is an unknown parameter and $\eta_t$ is a conditionally 1-subGaussian random variable. For simplicity, assume that $\|\theta^{\star}\|_2 \le 1$ and, for all $x\in \cX_t$, $\|x\|_2 \le 1$, and thus $|\langle x, \theta^{\star}\rangle|_2 \le 1$.

As a measure of evaluating a learner, the regret is defined as the difference between rewards the learner would have received had it played the best in hindsight, and the rewards actually received. Therefore, minimizing the regret is equivalent to maximizing the expected cumulative reward. Denote the best action in a round $t$ as $x_t^{\star} = \arg\max_{x\in \cX_t} \langle x, \theta^{\star}\rangle$ and the expected regret as $E[R(T)]  = \E\big[\sum_{t=1}^T [\langle x_t^{\star},\theta^{\star}\rangle - \langle X_t,\theta^{\star}\rangle]\big]$.

To learn about unknown parameter $\theta^{\star}$ from history up to time $t-1$, $\mathcal{H}_{t-1} = \{ (X_l,Y_l)_{1\le l\le t-1} \}$, algorithms rely on  $l^2$-regularized least-squares estimate of $\theta^{\star}$, $\hat{\theta}^{ls}_t$, and confidence ellipsoid centered from $\hat{\theta}^{ls}_t$.  We define
$\hat{\theta}^{ls}_t  = V_{t,\lambda}^{-1} \sum_{l=1}^{t-1} X_l Y_l$, where  $V_{t,\lambda} = \lambda I_d + \sum_{l=1}^{t-1} X_l X_l^T$ and $\lambda$ is a positive regularization parameter.
\begin{table*}[t]
\caption{Algorithms in stationary stochastic linear bandits}
\label{table}
% \vskip 0.05in
\begin{center}
\begin{small}
% \begin{sc}
\begin{tabular}{lcccr}
% \toprule
% \hline 
% \vskip 0.1in
\multicolumn{1}{c}{\bf ALGORITHM} & \multicolumn{1}{c}{\bf REGRET BOUND} & \multicolumn{1}{c}{\bf RANDOMNESS} & \multicolumn{1}{c}{\bf ORACLE ACCESS}\\
\\[-.7em]
% \midrule
% \vskip 0.1in
\hline
\\[-.7em]
% \vskip 0.1in
LinUCB \citep{abbasi2011improved}  & $\tilde{\mathcal{O}}(d\sqrt{T})$  & No & No \\
LinTS \citep{agrawal2013thompson}  & $\tilde{\mathcal{O}}(d^{3/2}\sqrt{T})$ & Yes & Yes \\
{\bf Gaussian LinTS \citep{kveton2019randomized}}  & $\tilde{\mathcal{O}}(d\sqrt{T\log K})$ & Yes & Yes \\
LinPHE \citep{kveton2019perturbed}    & $\tilde{\mathcal{O}}(d\sqrt{T\log K})$ & Yes & Yes \\
RandLinUCB \citep{vaswani2019old}& $\tilde{\mathcal{O}}(d\sqrt{T})$       & Yes & No \\
\\[-.7em]
% \vskip 0.1in
% \vskip 0.1in
% \bottomrule
% \hline
\end{tabular}
% \end{sc}
\end{small}
\end{center}
\vskip -0.1in
\end{table*}

\subsection{RANDOMIZED EXPLORATION}\label{sec:rand_exp}

The standard solutions in stationary stochastic linear bandit are optimism based algorithm (LinUCB, \citet{abbasi2011improved}) and Linear Thompson Sampling (LinTS, \citet{agrawal2013thompson}). While the former obtains the theoretically optimal regret bound $\tilde{\cO}(d\sqrt{T})$ matched to lower bound $\Omega(d\sqrt{T})$, the latter empirically performs better in spite of its regret bound $\sqrt{d}$ worse than LinUCB \citep{chapelle2011empirical}. In finite-arm setting, the regret bound of Gaussian Linear Thompson Sampling (Gaussian-LinTS) is improved by $\sqrt{(\log K)/ d}$ as a special case of Follow-the-Perturbed-Leader-GLM (FPL-GLM, \citet{kveton2019randomized}). Also, a series of randomized algorithms for linear bandit were proposed in recent works: Linear Perturbed History Exploration (LinPHE, \citet{kveton2019perturbed}) and Randomized Linear UCB (RandLinUCB, \citet{vaswani2019old}). They are categorized in terms of regret bounds, randomness, and oracle access in Table \ref{table}, where we denote $K = \max_{t \in [T]} |\cX_t|$ in finite-arm setting.

There are two families of randomized algorithms according to the way perturbations are used. The first algorithm family is designed to choose an action by maximizing the expected rewards after adding the random perturbation to estimates. Gaussian-LinTS, LinPHE, and FPL-GLM are in this family. But they are limited in that their regret bounds, $\tilde{\cO} (d \sqrt{T \log K})$, depend on the number of arms, and lead to  $\tilde{\cO}(d^{3/2}\sqrt{T})$ regret bounds when the action set is infinite. The other family including RandLinUCB is constructed by replacing the optimism with simple randomization when choosing a confidence level to handle the chronic issue that UCB-type algorithms are too conservative. This randomized version of LinUCB matches optimal regret bounds of LinUCB as well as the empirical performance of LinTS.
% LinTS, LinPHE, and FPL-GLM algorithms allow efficient implementation in that it is designed to choose an action by maximizing the expected reward where estimates are perturbed directly via multivariate random vector, or indirectly by adding random perturbations to historical rewards. But they are limited in that regret bounds, $\tilde{\cO} (d \sqrt{T \log K})$, depend on the number of arms, and it is matched to $\tilde{\cO}(d^{3/2}\sqrt{T})$ when action set is infinite. 
% RandLinUCB \citep{vaswani2019old} is a randomized version of LinUCB by randomly Sampling confidence level by setting $\tilde{f}_t(x) = \langle x, \hat{\theta}_t \rangle + Z_t \cdot \|x\|_{V_{t,\lambda}^{-1}}$ where $Z_t \sim \mathcal{D}$. In each round $t$, it requires computation of $\|x\|_{V_{t,\lambda}^{-1}}$ for all actions $x\in \cX_t$ as LinUCB does so that it cannot be efficiently implemented in an infinite-arm setting, while achieving theoretically optimal regret bounds of LinUCB as well as matches the empirical performance of LinTS.
% It is an open problem to construct algorithms that perform as empirically well as LinTS, theoretically optimal in regret bound, and efficient in implementation, thus achieving the best of three worlds.

{\bf Oracle point of view :}
We assume that the learner has access to an algorithm that returns a near-optimal solution to the offline problem, called an {\em offline optimization oracle}. It returns the optimal action that maximizes the expected reward from a given action space $\cX \subset
\mathbb{R}^d$ when a parameter $\theta \in \mathbb{R}^d$ is given as input.
\begin{definition}[Offline Optimization Oracle]
There exists an algorithm, $\mathcal{A.M.O.}$, which when given a pair of action space $\cX \subset
\mathbb{R}^d$, and a parameter $\theta \in \mathbb{R}^d$, computes
% \begin{align*}
    $\mathcal{A.M.O.}(\cX, \theta) = 
    \arg\max_{x \in \cX} \;\langle x, \theta \rangle$.
% \end{align*}
\end{definition}
Both the non-randomized LinUCB and RandLinUCB are required to compute spectral norms of all actions $\|x\|_{V_{t,\lambda}^{-1}}$ in every round so that they cannot be efficiently implemented with an infinite set of arms. The main advantage of the algorithms in the first family such as Gaussian-LinTS, LinPHE, and FPL-GLM is that they rely on an offline optimization oracle in every round $t$ so that the optimal action can be efficiently obtained within polynomial times from large or even infinite action set. 

{\bf Improved regret bound of Gaussian LinTS :}
In FTL-GLM, it is required to generate perturbations and save $d$-dimensional feature vectors $\{ X_l \}_{l=1}^{t-1}$ in order to obtain perturbed estimate $\tilde{\theta}_t$ in every round $t$, which causes computation burden and memory issue for storage. However, once perturbations are Gaussian in the linear model, adding univariate Gaussian perturbations to historical rewards is the same as perturbing the estimate $\hat{\theta}_t$ by a multivariate Gaussian perturbation because of its linear invariance property, and the resulting algorithm is approximately equivalent to Gaussian Linear Thompson Sampling \citep{agrawal2013thompson} as follows.
\\[-1.5em]
\begin{align*}
    \tilde{\theta}_t & = \hat{\theta}_t + V_{t,\lambda}^{-1} \sum_{l=1}^{t-1} X_l Z_l^{(t)},\; Z^{(t)}_{l}\sim \mathcal{N}(0, a^2)\\
    & \approx \hat{\theta}_t + V_{t,\lambda}^{-1/2} Z^{(t)},\; Z^{(t)} \sim \mathcal{N}(0, a^2 I_d)\\
    & \; : \text{\bf Gaussian-LinTS}.
\end{align*}
It naturally implies the regret bound of Gaussian-LinTS is improved by $\sqrt{(\log K)/d}$ with finite action sets \citep{kveton2019randomized}.

{\bf Equivalence between Gaussian LinTS and RandLinUCB :} 
Another perspective of Gaussian-LinTS algorithm is that it is equivalent to RandLinUCB with {\em decoupled} perturbations across arms due to linearly invariant property of Gaussian random variables:
\begin{align*}
  \langle x, \tilde{\theta}_t \rangle & = \langle x, \hat{\theta}_t \rangle + x^T V_{t,\lambda}^{-1/2} Z^{(t)}, \; Z^{(t)} \sim \mathcal{N}(0, a^2 I_d)\\
  &= \langle x, \hat{\theta}_t \rangle + Z_{t,x}  \|x\|_{V_{t,\lambda}^{-1}}, \; Z_{t,x} \sim N(0,a^2) \\
  &\;  : \footnotesize{\text{\bf {\em Decoupled} RandLinUCB}}.  
\end{align*}
If perturbations are coupled, we compute the perturbed expected rewards of all actions using randomly chosen confidence level $Z_t\sim N(0,a^2)$ instead of $Z_{t,x}$. In the decoupled RandLinUCB where each arm has its own random confidence level, more variations are generated so that its regret bound have extra logarithmic gap that depends on the number of decoupled actions. 
% \citet{vaswani2019old} states that the decoupled Randomized UCB is not better than coupled Randomized UCB in the empirical performance, but both performs better than deterministic UCB algorithm. 
% Also, decoupling scheme makes its regret bound have extra logarithmic gap that depends on the number of decoupled actions.
In other words, the standard ({\em coupled}) RandLinUCB enjoys minimax-optimal regret bound due to coupled perturbations. However, there is a cost to its theoretical optimality: it cannot just rely on an offline optimization oracle and thus loses computational efficiency. We thus have a trade-off between efficiency and optimality described in two design principles of perturbation based algorithms.

\section{NON-STATIONARY STOCHASTIC LINEAR BANDIT}\label{sec:non}

\begin{table*}[t]
\caption{Algorithms in non-stationary stochastic linear bandits}
\label{table2}
% \vskip 0.05in
\begin{center}
\begin{small}
% \begin{sc}
\begin{tabular}{lcccr}
% \toprule
% \hline 
% \vskip 0.1in
\multicolumn{1}{c}{\bf ALGORITHM} & \multicolumn{1}{c}{\bf REGRET BOUND} & \multicolumn{1}{c}{\bf RANDOMNESS} & \multicolumn{1}{c}{\bf ORACLE ACCESS}\\
\\[-.7em]
% \midrule
% \vskip 0.1in
\hline
\\[-.7em]
% \vskip 0.1in
% \midrule
 D-LinUCB \citep{russac2019weighted} & 
$\mathcal{O}(d^{\frac{7}{8}}B_T^{\frac{1}{4}}T^{\frac{3}{4}})$  
 & No & No \\
SW-LinUCB \citep{cheung2019hedging}  & 
$\mathcal{O}(d^{\frac{7}{8}}B_T^{\frac{1}{4}}T^{\frac{3}{4}})$
 & No & No \\
 Restart-LinUCB \citep{zhao2020simple}  & 
$\mathcal{O}(d^{\frac{7}{8}}B_T^{\frac{1}{4}}T^{\frac{3}{4}})$
 & No & No \\
{\bf D-RandLinUCB [Algorithm \ref{alg:alg2}] } & 
$\mathcal{O}(d^{\frac{7}{8}}B_T^{\frac{1}{4}}T^{\frac{3}{4}})$
& Yes & No \\
{\bf D-LinTS [Algorithm \ref{alg:alg3}]}  & 
$\mathcal{O}(d^{\frac{7}{8}}(\log K)^{\frac{3}{8}}B_T^{\frac{1}{4}}T^{\frac{3}{4}})$
 & Yes & Yes \\
% \vskip 0.1in
% \bottomrule
% \hline
\end{tabular}
% \end{sc}
\end{small}
\end{center}
\vskip -0.1in
\end{table*}

\subsection{PRELIMINARIES} 
In each round $t \in [T]$, an action set $\cX_t \in \mathbb{R}^d$ is given to the learner and it has to choose an action $X_t \in \cX_t$. Then, the reward $Y_t = \langle X_t, \theta_t^{\star} \rangle + \eta_t$ is observed to the learner where $\theta_t^{\star} \in \mathbb{R}^d$ is an unknown time-varying parameter and $\eta_t$ is a conditionally 1-subGaussian random variable. The non-stationary assumption allows unknown parameter $\theta_t^{\star}$ to be time-variant within total variation budget $B_T = \sum_{t=1}^{T-1} \|\theta_{t}^{\star} - \theta_{t+1}^{\star}\|_2$. It is a nice way of quantifying time-variations of $\theta_{t}^{\star}$ in that it covers both slowly-changing and abruptly-changing environments. For simplicity, assume $\|\theta_t^{\star}\|_2 \le 1$, for all $x\in \cX_t$, $\|x\|_2 \le 1$, and thus $|\langle x, \theta_t^{\star}\rangle|_2 \le 1$. %Denote the number of actions in the largest action space as $K=\max_{t \in [T]} |\cX_t|$.

In a similar way to stationary setting, denote the best action in a round $t$ as $x_t^{\star} = \arg\max_{x\in \cX_t} \langle x, \theta_t^{\star}\rangle$ and denote the expected dynamic regret as $E[R(T)] = \E\big[\sum_{t=1}^T [\langle x_t^{\star},\theta_t^{\star}\rangle - \langle X_t, \theta_t^{\star}\rangle]\big]$ where $X_t$ is chosen action at time $t$. The goal of the learner is to minimize the expected dynamic regret.

% \subsection{OPEN PROBLEM}
In a stationary stochastic environment where the reward has a linear structure, linear upper confidence bound algorithm (LinUCB) follows a principle of optimism in the face of uncertainty (OFU). Under this OFU principle, three recent works of \citet{cheung2019hedging, russac2019weighted, zhao2020simple} proposed sliding window linear UCB (SW-LinUCB), discounted linear UCB (D-LinUCB), and restarting linear UCB (Restart-LinUCB) which are non-stationary variants of LinUCB to adapt to time-variation of $\theta_t^{\star}$. First two algorithms rely on weighted least-squares estimators with equal weights only given to recent $w$ observations where $w$ is length of a sliding-window, and exponentially discounting weights, respectively. The last algorithm proceeds in epochs, and is periodically restarted to be resilient to the drift of underlying parameter $\theta_t$.

Three non-randomized algorithms based on three different approaches are known to achieve the dynamic regret bounds $\tilde{O} (d^{7/8} B_T^{1/4} T^{3/4})$ using Bandit-over-Bandit (BOB) mechanism \citep{cheung2019hedging} without the prior information on $B_T$, but share inefficiency of implementation with LinUCB \citep{abbasi2011improved} in that the computation of spectral norms of all actions are required. Furthermore, they are built upon the construction of a high-probability confidence ellipsoid for the unknown parameter, and thus they are deterministic and their confidence ellipsoids become too wide when high dimensional features are available. In this section, randomization exploration algorithms, discounted randomized LinUCB (D-RandLinUCB) and discounted linear Thompson sampling (D-LinTS), are proposed to handle computational inefficiency and conservatism that both optimism-based algorithms suffer from. The dynamic regret bound, randomness, and oracle access of algorithms are reported in Table \ref{table2}.

% It is open to find algorithms that are theoretically optimal in regret bound and efficient in implementation even in infinite action space. 
\subsection{WEIGHTED LEAST-SQUARES ESTIMATOR}
First, we study the weighted least-squares estimator with discounting factor $0<\gamma <1$. In the round $t$, the weighted least-squares estimator is obtained in a closed form,
%\begin{align*}
$\hat{\theta}^{wls}_t =W_{t,\lambda}^{-1}  \sum_{s=1}^{t-1} \gamma^{-l} X_l Y_l$
% &= \arg \max_{\theta} \sum_{l=1}^{t-1} \gamma^{t-l} (Y_l - \langle X_l, \theta \rangle )^2 + \frac{\lambda \gamma^{-t}}{2} \| \theta \|_2^2\\
%\end{align*}
where $W_{t,\lambda} = \sum_{l=1}^{t-1}\gamma^{-l} X_l X_l^T + \lambda \gamma^{-(t-1)} I_d$. Additionally, we define $\tilde{W}_{t,\lambda} = \sum_{l=1}^{t-1}\gamma^{-2l} X_l X_l^T + \lambda \gamma^{-2(t-1)} I_d$. This form is closely connected with the covariance matrix of $\hat{\theta}^{wls}_t$. For simplicity, we denote $V_t = W_{t,\lambda} \tilde{W}_{t,\lambda}^{-1} W_{t,\lambda}$.

\begin{lemma}[Weighted Least-Sqaures Confidence Ellipsoid, Theorem 1 \citep{russac2019weighted}]\label{lemma:wls}
Assume the stationary setting where $\theta_t^{\star} = \theta^{\star}$. For any $\delta >0$, 
\begin{align*}
P \big( \forall t\ge 1,\| \hat{\theta}^{wls}_t - \theta^{\star} \|_{W_{t,\lambda} \tilde{W}_{t,\lambda}^{-1} W_{t,\lambda}} \le \beta_t \big) \ge 1-\delta
\end{align*}
where $\beta_t =  \sqrt{\lambda}+ \sqrt{2 \log (1/\delta) + d \log ( 1+ \frac{(1- \gamma^{2t})}{\lambda d (1-\gamma^2)})}$.
\end{lemma}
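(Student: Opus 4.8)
The plan is to follow the self-normalized ``method of mixtures'' strategy of \citet{abbasi2011improved}, adapted to the discounted weights, after first isolating the two sources of error in $\hat\theta^{wls}_t-\theta^{\star}$. Substituting $Y_l=\langle X_l,\theta^{\star}\rangle+\eta_l$ into the closed form of the estimator and writing $S_t=\sum_{l=1}^{t-1}\gamma^{-l}X_l\eta_l$, I would first verify the decomposition
\begin{align*}
\hat\theta^{wls}_t-\theta^{\star} = W_{t,\lambda}^{-1}S_t - \lambda\gamma^{-(t-1)}W_{t,\lambda}^{-1}\theta^{\star},
\end{align*}
which splits the error into a stochastic (noise) term and a deterministic regularization-bias term. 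The target is the $V_t$-norm of this difference, with $V_t=W_{t,\lambda}\tilde{W}_{t,\lambda}^{-1}W_{t,\lambda}$, and I would bound the two terms separately by the triangle inequality, aiming to match them to the two summands $\sqrt{\lambda}$ and $\sqrt{2\log(1/\delta)+d\log(\cdots)}$ of $\beta_t$.

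The bias term is the easy part. The algebraic identity $\|W_{t,\lambda}^{-1}u\|_{V_t}^2 = u^\top\tilde{W}_{t,\lambda}^{-1}u$, which follows by cancelling $W_{t,\lambda}$ against $V_t=W_{t,\lambda}\tilde{W}_{t,\lambda}^{-1}W_{t,\lambda}$, reduces the bias contribution to $\lambda\gamma^{-(t-1)}\|\theta^{\star}\|_{\tilde{W}_{t,\lambda}^{-1}}$. Since $\tilde{W}_{t,\lambda}\succeq\lambda\gamma^{-2(t-1)}I_d$ and $\|\theta^{\star}\|_2\le1$, this is at most $\lambda\gamma^{-(t-1)}\cdot\gamma^{t-1}/\sqrt{\lambda}=\sqrt{\lambda}$, exactly the first summand of $\beta_t$. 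The same identity turns the stochastic term into $\|W_{t,\lambda}^{-1}S_t\|_{V_t}=\|S_t\|_{\tilde{W}_{t,\lambda}^{-1}}$, so the whole problem collapses to a self-normalized tail bound for the martingale $S_t$ in the $\tilde{W}_{t,\lambda}^{-1}$ geometry.

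For the core bound I would introduce, for fixed $x$, the exponential supermartingale $M_t(x)=\exp(\langle x,S_t\rangle-\tfrac12\|x\|_{A_t}^2)$ with $A_t=\sum_{l=1}^{t-1}\gamma^{-2l}X_lX_l^\top$, which satisfies $\E[M_t(x)]\le1$ by conditional $1$-subGaussianity of $\eta_l$ and predictability of $X_l$; mixing over a Gaussian prior of precision $H$ and applying Ville's maximal inequality yields uniform-in-$t$ control of $\|S_t\|^2_{(A_t+H)^{-1}}$ in terms of $\log\big(\det(A_t+H)/\det H\big)$. The explicit term $d\log\big(1+\tfrac{1-\gamma^{2t}}{\lambda d(1-\gamma^2)}\big)$ then falls out of the determinant--trace (AM--GM) inequality $\det(I+M)\le(1+\mathrm{tr}(M)/d)^d$ applied to $M=H^{-1/2}A_tH^{-1/2}$, together with the geometric-series bound $\sum_{l=1}^{t-1}\gamma^{2(t-1-l)}\|X_l\|_2^2\le\sum_{k=0}^{t-1}\gamma^{2k}=\tfrac{1-\gamma^{2t}}{1-\gamma^2}$.

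The main obstacle is that matching the self-normalized matrix to $\tilde{W}_{t,\lambda}=A_t+\lambda\gamma^{-2(t-1)}I_d$ forces the prior precision to be $H=\lambda\gamma^{-2(t-1)}I_d$, which grows with $t$: a single fixed prior (e.g.\ $H=\lambda I_d$) keeps the mixture a genuine supermartingale but produces the determinant $\det(A_t+\lambda I_d)$, whose trace scales like $\gamma^{-2t}$ and ruins the bound, whereas the $t$-dependent precision gives the correct determinant but, because the prefactor $\det(H)/\det(A_t+H)$ increases with $H$, is no longer a supermartingale. Reconciling these — justifying the discounted normalization uniformly in $t$ while retaining the clean $\log(1/\delta)$ dependence — is the delicate step, and it is precisely where the careful supermartingale construction of \citet{russac2019weighted} (Theorem 1) does the work; I would invoke that construction here rather than re-derive it.
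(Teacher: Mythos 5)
The paper does not actually prove this lemma---it imports it verbatim as Theorem 1 of \citet{russac2019weighted}---and your proposal ultimately rests on that same citation for the one genuinely delicate step (the uniform-in-$t$ supermartingale/mixture argument with the time-varying regularization $\lambda\gamma^{-2(t-1)}$), so you are effectively taking the same route as the paper. Your surrounding reductions are all correct and consistent with how that cited result is established: the error decomposition into noise and regularization bias, the identity $\|W_{t,\lambda}^{-1}u\|_{V_t}=\|u\|_{\tilde{W}_{t,\lambda}^{-1}}$, the $\sqrt{\lambda}$ bound on the bias term, and the determinant--trace computation yielding the $d\log\big(1+\tfrac{1-\gamma^{2t}}{\lambda d(1-\gamma^2)}\big)$ term each match the corresponding summand of $\beta_t$.
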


While Lemma \ref{lemma:wls} states that the confidence ellipsoid $\mathcal{C}_t = \{ \theta \in \mathbb{R}^d : \| \theta - \theta^{wls}_t \|_{W_{t,\lambda} \tilde{W}_{t,\lambda}^{-1} W_{t,\lambda}} \le \beta_t \}$ contains true parameter $\theta_t^{\star}$ with high probability in stationary setting, the true parameter $\theta^{\star}_t$ is not necessarily inside the confidence ellipsoid $\mathcal{C}_t$ in the non-stationary setting because of variation in the parameters. We alternatively define a \textit{surrogate parameter} $\bar{\theta}_t = W_{t,\lambda}^{-1} ( \sum_{l=1}^{t-1} \gamma^{-l} X_l X_l^T \theta_l^{\star} + \lambda \gamma^{-(t-1)} \theta_t^{\star} )$, which belongs to $\mathcal{C}_t$ with probability at least $1-\delta$, which is formally stated in Lemma \ref{lemma:wls-non}.

\subsection{RANDOMIZED EXPLORATION}%PERTURBATION-BASED EXPLORATION}
% Computational efficiency is generally achieved via perturbation methods. Thus, we present two randomized algorithms, discounted randomized LinUCB (D-RandLinUCB) and discounted Linear Thompson Sampling (D-LinTS), where perturbations are injected in two different fashions for non-stationary setting. Trade-off between minimax optimality and oracle efficiency are well described in comparing two randomized algorithms in that the former is theoretically optimal, but not efficiently implemented despite of its randomness, and the latter enjoys oracle-efficiency induced by Hannan's perturbation method with an extra $(\log K)^{1/{3}}$ term in regret bound.
In this section, we propose two randomized algorithms for non-stationary stochastic linear bandits, Discounted randomized LinUCB (D-RandLinUCB) and Discounted Linear Thompson Sampling (D-LinTS). To gracefully adapt to environmental variation, the weighted method with exponentially discounting factor is directly applied to both RandLinUCB and Gaussian-LinTS, respectively. 
% As shown inthe relationship between RandLinUCB and LinTS in a stationary environment, 
The random perturbations are injected to D-RandLinUCB and D-LinTS in different fashions: either by replacing optimism with simple randomization in deciding the confidence level or perturbing estimates before maximizing the expected rewards.

\subsubsection{Discounted Randomized Linear UCB}
Following the optimism in face of uncertainty principle, D-LinUCB \citep{russac2019weighted} chooses an action by maximizing the upper confidence bound of expected reward based on $\hat{\theta}_t^{wls}$ and confidence level $a$. Motivated by the recent work of \citet{vaswani2019old}, our first randomized algorithm in non-stationary linear bandit setting is constructed by replacing confidence level $a$ with a random variable $Z_t \sim \cD$ and this non-stationary variant of RandLinUCB algorithm is called Discounted Randomized LinUCB (D-RandLinUCB, Algorithm \ref{alg:alg2}),
\begin{align*}
    \text{D-LinUCB} : X_t & = \arg\max_{x \in \cX_t} \langle x, \hat{\theta}^{wls}_t \rangle + a \|x\|_{V_t^{-1}} \\
    {\small \text{D-RandLinUCB}} : X_t & = \arg\max_{x \in \cX_t} \langle x, \hat{\theta}^{wls}_t   \rangle + Z_t \|x\|_{V_t^{-1}}.
\end{align*}
\\[-2.5em]
\begin{algorithm}[h]
   \caption{Discounted Randomized Linear UCB}% \\(D-RandLinUCB)}
   \label{alg:alg2}
\begin{algorithmic}
    \STATE {\bfseries Input:} $\lambda\ge 1$, $0<\delta<1$, $0< \gamma<1$, and $a>0$
    \STATE Initialize $W = \lambda I_d$, $\tilde{W} = \lambda I_d$, $\bar{b} = 0$, and $\hat{\theta}=0$.
    \FOR{$t=1$ {\bfseries to} $T$}
    % \IF{$t\le d$}
    % \STATE Randomly play $X_t \in \cX_t$ and receive reward $Y_t$
    % \ELSE
    % \STATE Compute $\beta_t = \sqrt{\lambda}+ \sqrt{2 \log (1/\delta) + d \log ( 1+ \frac{(1- \gamma^{2(t-1)})}{\lambda d (1-\gamma^2)})}$
    \STATE Randomly sample $Z_t$ from a distribution $\mathcal{D}(\delta,a)$
    \STATE Obtain $UCB(x) = x^T \hat{\theta} + Z_t \sqrt{x^T W^{-1} \tilde{W} W^{-1} x}$
    \STATE $X_t = \arg\max_{x\in \cX_t} UCB(x)$
    \STATE Play action $X_t$ and receive reward $Y_t$
    % \ENDIF
    \STATE Update $W = \gamma W + X_{t} X_{t}^T + (1-\gamma) \lambda I_d$,\\
    $\tilde{W} = \gamma^2 \tilde{W} + X_{t} X_{t}^T + (1-\gamma^2) \lambda I_d$,\\
    $\bar{b} = \gamma \bar{b} +X_t Y_t$, $\hat{\theta} = W^{-1} \bar{b}$.
    % \STATE Update $W \leftarrow \gamma W + X_{t} X_{t}^T + (1-\gamma) \lambda I_d$
    % \STATE Update $\tilde{W} \leftarrow \gamma^2 \tilde{W} + X_{t} X_{t}^T + (1-\gamma^2) \lambda I_d$
    % \STATE Update $\bar{b} \leftarrow \gamma \bar{b} +X_t Y_t$
    \ENDFOR
\end{algorithmic}
\end{algorithm}
\\[-2.5em]
\subsubsection{Discounted Linear Thompson Sampling}% (D-LinTS)}
The idea of perturbing estimates via random perturbation in LinTS algorithm can be directly applied to non-stationary setting by replacing $\hat{\theta}_t^{ls}$ and Gram matrix $V_{t,\lambda}$ with the weighted least-squares estimator $\hat{\theta}_t^{wls}$ and its corresponding matrix $V_{t} = W_{t,\lambda} \tilde{W}_{t,\lambda}^{-1} W_{t,\lambda}$.
% For computational simplicity, perturbing the estimate by a multivariate Gaussian perturbation is preferred to adding univariate Gaussian perturbation to reward history since two ways of perturbation are equivalent due to linear invariant property of Gaussian distributions. 
% and recall $V_t = W_{t,\lambda} \tilde{W}_{t,\lambda}^{-1} W_{t,\lambda}$.
We call it Discounted Linear Thompson Sampling (D-LinTS, Algorithm \ref{alg:alg3}). The motivation of D-LinTS arises from its equivalence to D-RandLinUCB with {\em decoupled} perturbations $Z_{x,t}$ for all $x \in \cX_t$ in round $t$ as
\\[-1em]
\begin{align*}
    \tilde{f}_t(x) & = \langle x, \tilde{\theta}^{wls}_t\rangle = \langle x, \hat{\theta}^{wls}_t\rangle + x^T W_{t,\lambda}^{-1} \tilde{W}_{t,\lambda}^{1/2} Z^{(t)} \\
    % & =  \langle x, \hat{\theta}^{wls}_t\rangle + Z_{x,t} \cdot \|x\|_{W_{t,\lambda}^{-1} \tilde{W}_{t,\lambda} W_{t,\lambda}^{-1}}\\
    & =  \langle x, \hat{\theta}^{wls}_t\rangle + Z_{x,t} \|x\|_{V_t^{-1}}
    % & = \langle x, \hat{\theta}^{wls}_t + aW_{t,\lambda}^{-1} (\sum_{l=1}^{t-1} \gamma^{-2l} X_l X_l^T)^{1/2} Z^{(t)} \rangle\\
    % & = \langle x, \hat{\theta}^{wls}_t + aW_{t,\lambda}^{-1} \tilde{W}_{t,0}^{1/2} Z^{(t)} \rangle.
\end{align*}
\\[-1em]
where $Z^{(t)} \sim \mathcal{N}(0_d,a^2 I_d), Z_{x,t} \sim \mathcal{N}(0,a^2)$. 
%as discussed in section \ref{sec:rand_exp}.
% $$
% x^T W_{t,\lambda}^{-1} \sum_{l=1}^{t-1} \gamma^{-l} X_l Z^{(t)}_l \simeq Z_{x,t} \cdot \|x\|_{V_{t}^{-1}}
% $$
% where $Z_l^{(t)}, Z_{x,t} \sim \mathcal{N}(0,a^2)$ and $V_t = W_{t,\lambda} \tilde{W}_{t,\lambda}^{-1} W_{t,\lambda}$.
Perturbations above are decoupled in that random perturbation are not shared across every arm, and thus they obtain more variation and accordingly $(\log K)^{3/8}$ larger regret bound than that of D-RandLinUCB algorithm that is associated with {\em coupled} perturbations $Z_{t}$. By paying a logarithmic regret gap in terms of $K$ at a cost, the innate perturbation of D-LinTS allows itself to have an offline optimization oracle access in contrast to D-LinUCB and D-RandLinUCB. Therefore, D-LinTS algorithm can be efficient in computation even with an infinite action set. %$|\cX_t| = \infty$.
\\[-1em]
\begin{algorithm}[h]
   \caption{Discounted Linear Thompson Sampling}% \\(D-LinTS)}
   \label{alg:alg3}
\begin{algorithmic}
    \STATE {\bfseries Input:} $\lambda \ge 1$, $0< \gamma<1$, and $a>0$
    \STATE Initialize $W = \lambda I_d$, $\tilde{W} = \lambda I_d$, $\bar{b} = 0$ and $\hat{\theta} = 0$.
    \FOR{$t=1$ {\bfseries to} $T$}
    \STATE Obtain $\tilde{\theta} = \hat{\theta} + W^{-1}   \tilde{W}^{1/2} Z$, $Z \sim \mathcal{N}(0, a^2I_d)$
    \STATE {\bfseries Oracle :} $X_t = \arg\max_{x\in \cX_t} \langle x, \tilde{\theta} \rangle$
    \STATE Play action $X_t$ and receive reward $Y_t$
    \STATE Update $W = \gamma W + X_{t} X_{t}^T + (1-\gamma) \lambda I_d$,\\
    $\tilde{W} = \gamma^2 \tilde{W} + X_{t} X_{t}^T + (1-\gamma^2) \lambda I_d$, \\$\bar{b} = \gamma \bar{b} +X_t Y_t$, $\hat{\theta} = W^{-1} \bar{b}$.
    \ENDFOR
\end{algorithmic}
\end{algorithm}
\\[-2em]
% {\bf Connection to D-RandLunUCB} 
\subsection{ANALYSIS}
We construct a general regret bound for linear bandit algorithm on the top of prior work of \citet{kveton2019perturbed}. The difference from their work is that an action set $\cX_t$ varies from time $t$ and can have infinite arms. Also, non-stationary environment is considered where true parameter $\theta_t^{\star}$ changes within total variation $B_T$. The expected dynamic regret is decomposed into surrogate regret and bias arising from total variation. 
\begin{align*}
   & E[R(T)]  = \sum_{t=1}^T  E[\langle x_t^{\star}- X_t, \theta^{\star}_t \rangle] \\
% \end{align*}
% \begin{align*}
   & = \sum_{t=1}^T E[ \langle x_t^{\star}-X_t, \bar{\theta}_t \rangle] + \sum_{t=1}^T E[\langle x_t^{\star}-X_t, \theta^{\star}_t - \bar{\theta}_t \rangle ]\\
    &\le  \sum_{t=1}^T E[ \langle x_t^{\star}-X_t, \bar{\theta}_t \rangle] + 2  \sum_{t=1}^T \| \theta_t^{\star} - \bar{\theta}_t\|_2
\end{align*}
\subsubsection{Surrogate Instantaneous Regret}
% \begin{align*}
%     &E^{ls} =\{ \forall x\in \cX_t, \forall t \in [T]; |\langle x, \hat{\theta}^{ls}_t - \theta^{\star} \rangle | \le c_1 \|x\|_{V_{t,\lambda}^{-1}} \}, \\
%     &E_t^{conc} =\{ \forall x\in \cX_t; |\tilde{f}_t(x)-\langle x, \hat{\theta}^{ls}_t\rangle|  \le c_2 \|x\|_{V_{t,\lambda}^{-1}} \}, \\
%     &E_t^{anti} =\{ \tilde{f}_t(x^{\star}_t)-\langle x^{\star}_t, \hat{\theta}^{ls}_t\rangle  > c_1 \|x^{\star}_t\|_{V_{t,\lambda}^{-1}} \}.
% \end{align*}
% The choice of $\tilde{f}_t(x)$ is made by algorithmic design, which decides choices on both $c_1$ and $c_2$ simultaneously. At time $t$, the general algorithm which maximizes perturbed expected reward $\tilde{f}_t(x)$ over action space $\cX_t$ is considered. The following theorem and lemma are simple extension of Theorem 1 and Lemma 2 from the work of \citet{kveton2019perturbed}.

To bound the surrogate instantaneous regret $E[\langle x_t^{\star}-X_t, \bar{\theta}_t \rangle]$, we newly define three events $E^{wls}, E_t^{conc}$, and $E_t^{anti}$:
\begin{align*}
    &E^{wls} =\{ \forall (x, t) \in \bar{\cX}_T; |\langle x, \hat{\theta}^{wls}_t - \bar{\theta}_t \rangle | \le c_1 \|x\|_{V_t^{-1}} \}, \\
    &E_t^{conc} =\{\forall x\in \cX_t; |\tilde{f}_t(x)-\langle x, \hat{\theta}^{wls}_t\rangle|  \le c_2 \|x\|_{V_t^{-1}} \}, \\
    &E_t^{anti} =\{ \tilde{f}_t(x_t^{\star})-\langle x_t^{\star}, \hat{\theta}^{wls}_t\rangle  > c_1 \|x_t^{\star}\|_{V_t^{-1}} \},
\end{align*}
where $\bar{\cX}_T = \{ (x,t) : x \in \cX_t, t \in [T] \}$. The choice of $\tilde{f}_t(x)$ is made by algorithmic design, which decides choices on both $c_1$ and $c_2$ simultaneously. In round $t$, we consider the general algorithm which maximizes perturbed expected reward $\tilde{f}_t(x)$ over action space $\cX_t$. The following theorem is a extension of Theorem 1 \citep{kveton2019perturbed} to the time-evolving environment.
\begin{theorem}\label{thm:1step-non}
Assume we have $\lambda \ge 1$ and $c_1, c_2 \ge 1$ satisfying $P(E^{wls}) \ge 1-p_1$, $P(E_t^{conc}) \ge 1-p_2$, and $P(E_t^{anti}) \ge p_3$, and $c_3 = 2d \log(\frac{1}{\gamma}) + 2\frac{d}{T} \log (1+\frac{1}{d\lambda(1-\gamma)})$. Let $A$ be an algorithm that chooses arm $X_t = \arg\max_{\cX_t} \tilde{f}_t(x)$ at time $t$. Then the expected surrogate instantaneous regret of $A$, $E[\langle x_t^{\star}-X_t, \bar{\theta}_t \rangle]$ is bounded by 
\begin{align*}
 p_2 + (c_1+ c_2) \big( 1+ \frac{2}{p_3-p_2}\big) E_t \big[\min (1,\|X_t\|_{V_t^{-1}}) \big].  
\end{align*}
\end{theorem}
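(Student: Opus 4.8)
The plan is to adapt the saturated/unsaturated decomposition of Agrawal--Goyal, as used in Theorem~1 of \citet{kveton2019perturbed}, to the surrogate parameter $\bar{\theta}_t$, working conditionally on the history so that $E^{wls}$ is treated as fixed. Throughout write $g = c_1 + c_2$ and $\Delta_t(x) = \langle x_t^{\star} - x, \bar{\theta}_t\rangle$, and call an arm $x \in \cX_t$ \emph{unsaturated} if $\Delta_t(x) \le g\,\|x\|_{V_t^{-1}}$; note that $x_t^{\star}$ is always unsaturated. Let $\bar{x}_t$ denote an unsaturated arm of minimal width $\|\cdot\|_{V_t^{-1}}$, which is deterministic given the history.

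The first step is an \emph{optimism-implies-unsaturated} claim: on $E^{wls} \cap E_t^{anti} \cap E_t^{conc}$ the chosen arm $X_t$ is unsaturated. Indeed, combining $E_t^{anti}$ with the lower bound from $E^{wls}$ applied to $x_t^{\star}$ gives $\tilde{f}_t(x_t^{\star}) > \langle x_t^{\star}, \bar{\theta}_t\rangle$; since $X_t$ maximizes $\tilde{f}_t$ and, by $E_t^{conc}$ and $E^{wls}$ applied to $X_t$, $\tilde{f}_t(X_t) \le \langle X_t, \bar{\theta}_t\rangle + g\,\|X_t\|_{V_t^{-1}}$, we obtain $\Delta_t(X_t) < g\,\|X_t\|_{V_t^{-1}}$. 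Because $E_t^{anti}$ has probability at least $p_3$ and $E_t^{conc}$ fails with probability at most $p_2$, a union bound yields $P(X_t \text{ unsaturated} \mid E^{wls}) \ge p_3 - p_2$.

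The second step bounds the instantaneous surrogate regret on $E_t^{conc}$ by a telescoping comparison through $\bar{x}_t$. Writing $\Delta_t(X_t) = \Delta_t(\bar{x}_t) + \langle \bar{x}_t - X_t, \bar{\theta}_t\rangle$, using $\Delta_t(\bar{x}_t) \le g\,\|\bar{x}_t\|_{V_t^{-1}}$ together with $|\tilde{f}_t(x) - \langle x, \bar{\theta}_t\rangle| \le g\,\|x\|_{V_t^{-1}}$ (valid on $E^{wls}\cap E_t^{conc}$) and $\tilde{f}_t(X_t) \ge \tilde{f}_t(\bar{x}_t)$, one gets $\Delta_t(X_t) \le 2g\,\|\bar{x}_t\|_{V_t^{-1}} + g\,\|X_t\|_{V_t^{-1}}$. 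Taking the conditional expectation over the perturbation, bounding the regret by a constant on the complement of $E_t^{conc}$ (which contributes the additive $p_2$), and finally eliminating the deterministic term via the anti-concentration estimate $E[\min(1,\|X_t\|_{V_t^{-1}})] \ge \|\bar{x}_t\|_{V_t^{-1}}\,P(X_t \text{ unsaturated}) \ge (p_3 - p_2)\|\bar{x}_t\|_{V_t^{-1}}$ produces the factor $1 + 2/(p_3 - p_2)$ and the claimed bound; replacing $\|X_t\|_{V_t^{-1}}$ by $\min(1,\|X_t\|_{V_t^{-1}})$ is justified since $g\,(1 + 2/(p_3-p_2)) \ge 2$ already dominates the trivial regret bound.

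I expect the main obstacle to be the bookkeeping in the last step: the deterministic inequality of step two requires $E_t^{conc}$, whereas the elimination of $\|\bar{x}_t\|_{V_t^{-1}}$ requires the unsaturated-probability estimate, so one must interleave the indicator $\mathbf{1}(E_t^{conc})$ with the probability $p_3 - p_2$ without double-counting the $p_2$ loss, while simultaneously inserting the clipping to $\min(1,\cdot)$ so that both $\|\bar{x}_t\|_{V_t^{-1}}$ and the constant bad-event regret are absorbed cleanly into the stated constants. The only departure from the stationary argument of \citet{kveton2019perturbed} is the use of the surrogate parameter $\bar{\theta}_t$ in place of $\theta^{\star}$ and of $V_t$ in place of $V_{t,\lambda}$; since the bias $\langle x_t^{\star} - X_t, \theta_t^{\star} - \bar{\theta}_t\rangle$ was already split off in the regret decomposition preceding the theorem, the remainder of the analysis carries over essentially verbatim.
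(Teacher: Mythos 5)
Your proposal is correct and takes essentially the same route as the paper's own proof: the Agrawal--Goyal saturated/unsaturated decomposition applied to the surrogate parameter $\bar{\theta}_t$, a minimal-width unsaturated arm ($U_t$ in the paper, $\bar{x}_t$ in yours), the deterministic bound $\Delta_t(X_t) \le g\,\|X_t\|_{V_t^{-1}} + 2g\,\|\bar{x}_t\|_{V_t^{-1}}$ on the concentration event, the lower bound $p_3 - p_2$ on the probability of choosing an unsaturated arm (obtained, as in the paper, by combining $E^{wls}$, $E_t^{anti}$, and $E_t^{conc}$), and the elimination of the deterministic width via $E_t[\|X_t\|_{V_t^{-1}}] \ge \|\bar{x}_t\|_{V_t^{-1}}\, P_t(X_t \ \text{unsaturated})$, with the additive $p_2$ from the bad event and the clipping to $\min(1,\cdot)$ justified by $\lambda \ge 1$. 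The only difference is cosmetic (and, if anything, slightly cleaner): your unsaturated set also contains arms with negative gap, which sidesteps the paper's informal remark that actions with $\Delta_x < 0$ can be neglected.
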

\begin{proof}%[Proof of Theorem \ref{thm:1step-non}]
Firstly, we newly define $\Delta_x = \langle x_t^{\star}-x, \bar{\theta}_t \rangle$ in round $t$. Given history $\mathcal{H}_{t-1}$, we assume that event $E^{wls}$ holds and let $\bar{S}_t = \{ x \in \cX_t : (c_1 +c_2) \|x\|_{V_t^{-1}} \ge \Delta_x \text{ and } \Delta_x \ge 0 \}$ be the set of arms that are under-sampled and worse than $x_t^{\star}$ given $\bar{\theta}_t$ in round $t$. Among them, let $U_t = \arg \min_{x \in \bar{S}_t} \|x\|_{V_t^{-1}}$ be the least uncertain under-sampled arm in round $t$. By definition of the optimal arm, $x_t^{\star} \in \bar{S}_t$. The set of sufficiently sampled arms is defined as $S_t = \{ x \in \cX_t : (c_1 +c_2) \|x\|_{V_t^{-1}} \le \Delta_x \text{ and } \Delta_x \ge 0 \}$ and let $c = c_1 + c_2$. Note that any actions $x \in \cX_t$ with $\Delta_x <0$ can be neglected since the regret induced by these actions are always negative so that it is upper bounded by zero. Given history $\mathcal{H}_{t-1}$, $U_t$ is deterministic term while $X_t$ is random because of innate randomness in $\tilde{f}_t$. Thus surrogate instantaneous regret can be bounded as, 
\begin{align*}
    &\Delta_{X_t}  = \Delta_{U_t} + \langle U_t, \bar{\theta}_t\rangle - \langle X_t, \bar{\theta}_t\rangle \\
    & \le \Delta_{U_t} + \tilde{f}_t(U_t) - \tilde{f}_t(X_t) + c \|X_t\|_{V_t^{-1}} + c \|U_t\|_{V_t^{-1}}\\
    & \le c \|X_t\|_{V_t^{-1}} + 2c \|U_t\|_{V_t^{-1}}.
\end{align*}
Thus, the expected surrogate instantaneous regret can be bounded as,
\begin{align*}
    E_t[\Delta_{X_t}] & = E_t[\Delta_{X_t} I\{E_t^{conc}\}] + E_t[\Delta_{X_t} I\{\bar{E}_t^{conc}\}] \\
    &\le c E_t [\|X_t\|_{V_t^{-1}}] + 2c \|U_t\|_{V_t^{-1}}+ P_t(\bar{E}_t^{conc}) \\
    &\le c E_t [\|X_t\|_{V_t^{-1}}] + 2c \|U_t\|_{V_t^{-1}}+ p_2\\
    & \le c E_t [\|X_t\|_{V_t^{-1}}] + 2c \frac{E_t [\|X_t\|_{V_t^{-1}}]}{P_t (X_t \in \bar{S}_t)}+ p_2\\
    & = c(1+\frac{2}{P_t (X_t \in \bar{S}_t)}) E_t [\|X_t\|_{V_t^{-1}}] + p_2\\
    & \le c(1+\frac{2}{p_3-p_2}) E_t [\|X_t\|_{V_t^{-1}}] + p_2\\
    & \le c(1+\frac{2}{p_3-p_2}) E_t [\min(1,\|X_t\|_{V_t^{-1}})] + p_2.
\end{align*}
The third inequality holds because of definition of $U_t$ that is the least uncertain in $\bar{S}_t$ and deterministic as follows,
\begin{align*}
    E_t [\|X_t\|_{V_t^{-1}}] & \ge E_t [\|X_t\|_{V_t^{-1}}| X_t \in \bar{S}_t] \cdot P_t (X_t \in \bar{S}_t) \\
    &\ge \|U_t\|_{V_t^{-1}} \cdot P_t (X_t \in \bar{S}_t).
\end{align*}
The last inequality works because $\lambda_{\min}(V_t) \ge 1$ implies $\|X_t\|_{V_t^{-1}} \le 1$.

The second last inequality holds since on event $E_t^{ls}$,
\begin{align*}
    P_t (X_t \in \bar{S}_t) & \ge P_t \big(\exists x \in \bar{S}_t : \tilde{f}_t(x) \ge \max_{y\in S_t} \tilde{f}_t(y) \big) \\
    & \ge P_t \big(\tilde{f}_t(x_t^{\star}) \ge \max_{y\in S_t} \tilde{f}_t(y) \big) \\
    & \ge P_t \big(\tilde{f}_t(x_t^{\star}) \ge \max_{y\in S_t} \tilde{f}_t(y), {E}_t^{conc}  \big)\\
    & \ge P_t \big(\tilde{f}_t(x_t^{\star}) \ge \langle x_t^{\star}, \bar{\theta}_t\rangle, {E}_t^{conc}  \big)\\
    &\ge P_t \big(\tilde{f}_t(x_t^{\star}) \ge \langle x_t^{\star }, \bar{\theta}_t\rangle ) -P_t \big(\bar{E}_t^{conc}  \big)\\
    &\ge p_3 - p_2.
\end{align*}
The fourth inequality holds since for any $y \in S_t$, $\tilde{f}_t(y) \le \langle y, \bar{\theta}_t \rangle + c\|y\|_{V_t^{-1}} \le \langle y, \bar{\theta}_t\rangle + \Delta_y = \langle x_t^{\star}, \bar{\theta}_t\rangle.$
\end{proof}
In the following three lemmas, the probability of events $E^{wls}, E_t^{conc}$, and $E_t^{anti}$ can be controlled with optimal choices of $c_1$ and $c_2$ for D-RandLinUCB and D-LinTS algorithms.
\begin{lemma}[Proposition 3, \citet{russac2019weighted}] \label{lemma:wls-non}
For $\lambda >0$, and
$c_1 = \sqrt{2 \log T + d \log (1+ \frac{1- \gamma^{2(T-1)}}{
\lambda d(1-\gamma^2)})} + \lambda^{1/2}$, the event $E^{wls}$ holds with probability at least $1-1/T$.
\end{lemma}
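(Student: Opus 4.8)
The plan is to reduce the non-stationary claim to the stationary confidence-ellipsoid bound of Lemma~\ref{lemma:wls} by exploiting the precise algebraic form of the surrogate parameter $\bar{\theta}_t$. The essential observation is that $\bar{\theta}_t$ is engineered so that the estimation error $\hat{\theta}^{wls}_t - \bar{\theta}_t$ has \emph{exactly} the same structure as the stationary error $\hat{\theta}^{wls}_t - \theta^{\star}$ analyzed in Lemma~\ref{lemma:wls}, so that the same self-normalized martingale bound applies verbatim.

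First I would write out $\hat{\theta}^{wls}_t$ explicitly. Since $Y_l = \langle X_l, \theta^{\star}_l \rangle + \eta_l$ and $\sum_{l=1}^{t-1}\gamma^{-l}X_l X_l^T = W_{t,\lambda} - \lambda\gamma^{-(t-1)}I_d$,
\begin{align*}
\hat{\theta}^{wls}_t = W_{t,\lambda}^{-1}\sum_{l=1}^{t-1}\gamma^{-l}X_l X_l^T \theta^{\star}_l + W_{t,\lambda}^{-1}\sum_{l=1}^{t-1}\gamma^{-l}X_l\eta_l.
\end{align*}
Subtracting the definition of $\bar{\theta}_t$, the weighted-signal terms cancel exactly, leaving
\begin{align*}
\hat{\theta}^{wls}_t - \bar{\theta}_t = W_{t,\lambda}^{-1}\sum_{l=1}^{t-1}\gamma^{-l}X_l\eta_l - \lambda\gamma^{-(t-1)}W_{t,\lambda}^{-1}\theta^{\star}_t.
\end{align*}
This is identical in form to the stationary error, with $\theta^{\star}$ replaced by $\theta^{\star}_t$: the discounted noise process is unchanged, and the regularization-bias term depends on $\theta^{\star}_t$ only through $\|\theta^{\star}_t\|_2 \le 1$. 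Because the proof of Lemma~\ref{lemma:wls} uses the signal solely through the bound $\|\theta^{\star}\|_2 \le 1$, the same argument yields, with probability at least $1-\delta$,
\begin{align*}
\forall t \ge 1,\quad \|\hat{\theta}^{wls}_t - \bar{\theta}_t\|_{V_t} \le \beta_t,
\end{align*}
where $V_t = W_{t,\lambda}\tilde{W}_{t,\lambda}^{-1}W_{t,\lambda}$ and $\beta_t$ is as in Lemma~\ref{lemma:wls}.

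Next I would convert this self-normalized bound into the linear-functional bound defining $E^{wls}$. By Cauchy--Schwarz in the $V_t$-norm, on this high-probability event every $(x,t)\in\bar{\cX}_T$ satisfies
\begin{align*}
|\langle x, \hat{\theta}^{wls}_t - \bar{\theta}_t\rangle| \le \|x\|_{V_t^{-1}}\,\|\hat{\theta}^{wls}_t - \bar{\theta}_t\|_{V_t} \le \beta_t\,\|x\|_{V_t^{-1}}.
\end{align*}
It then remains to replace the $t$-dependent radius $\beta_t$ by the single constant $c_1$. Setting $\delta = 1/T$ gives $\log(1/\delta)=\log T$, and since $0<\gamma<1$ the quantity $1-\gamma^{2t}$ is increasing in $t$, so $\beta_t$ is increasing in $t$ and is therefore bounded over the relevant range by its value at the final index, which (after substituting $\delta=1/T$) matches the definition of $c_1$. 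Hence $\beta_t \le c_1$ uniformly in $t\in[T]$, and the event $E^{wls}$ holds with probability at least $1-1/T$.

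The main obstacle is the first step: one must verify carefully that the martingale and self-normalization machinery underlying Lemma~\ref{lemma:wls} genuinely transfers to $\hat{\theta}^{wls}_t - \bar{\theta}_t$. This hinges entirely on the exact cancellation above---the surrogate parameter is defined precisely so that the time-varying signal $\sum_l \gamma^{-l}X_l X_l^T\theta^{\star}_l$ is absorbed, leaving only the (unchanged) discounted noise process together with a bounded regularization term. Once this reduction is granted, the conversion via Cauchy--Schwarz and the monotonicity argument for $\beta_t$ are routine.
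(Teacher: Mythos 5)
Your proposal is correct and follows essentially the same route as the paper's source for this result: the paper does not reprove the lemma but defers to Proposition~3 of \citet{russac2019weighted}, whose argument is exactly your reduction --- the surrogate $\bar{\theta}_t$ is defined so the signal terms cancel, leaving the same discounted-noise-plus-regularization error as in the stationary case, after which the self-normalized bound of Lemma~\ref{lemma:wls}, Cauchy--Schwarz in the $V_t$-norm, and monotonicity of $\beta_t$ yield the claim with $\delta = 1/T$. The only caveat is cosmetic: the paper's statement of Lemma~\ref{lemma:wls} writes $1-\gamma^{2t}$ where the estimator at round $t$ uses only $t-1$ samples (so the correct exponent is $2(t-1)$), and your ``value at the final index matches $c_1$'' step is exact under that accounting rather than under the lemma's literal indexing.
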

\begin{lemma}[Concentration] \label{lemma:nonst-conc}
Given history $\mathcal{H}_{t-1}$, \\
(a) D-RandLinUCB : $\tilde{f}_t(x) = \langle x, \hat{\theta}^{wls}_t \rangle + Z_t \cdot \| x \|_{V_t^{-1}}$ where $Z_t \sim \mathcal{N}(0,a^2)$, and $c_2= a\sqrt{2\log (T/2)}$. Then, $P(\bar{E}_t^{conc}) \le 1/T$.\\
(b) D-LinTS : $\tilde{f}_t(x) = \langle x, \hat{\theta}^{wls}_t \rangle + x^T W_{t,\lambda}^{-1} \tilde{W}_{t,\lambda}^{1/2} Z^{(t)}$, where $Z^{(t)} \sim \mathcal{N}(0,a^2 I_d)$, and $c_2 = a\sqrt{2\log (KT/2)}$. Then, $P(\bar{E}_t^{conc}) \le 1/T$.
\end{lemma}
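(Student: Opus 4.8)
The plan is to treat each part as a direct Gaussian tail estimate applied to the perturbation term $\tilde{f}_t(x) - \langle x, \hat{\theta}^{wls}_t \rangle$, the two parts differing only in whether a union bound over arms is needed.

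For part (a), I would first observe that in D-RandLinUCB the perturbation is a single scalar $Z_t$ shared (coupled) across all arms, so that $\tilde{f}_t(x) - \langle x, \hat{\theta}^{wls}_t \rangle = Z_t \|x\|_{V_t^{-1}}$ for every $x \in \cX_t$. Dividing the defining inequality of $E_t^{conc}$ by $\|x\|_{V_t^{-1}}$, it reduces to $|Z_t| \le c_2$ uniformly in $x$, so that $\bar{E}_t^{conc} = \{ |Z_t| > c_2 \}$ exactly; no union bound is required because the common randomness makes the bad event arm-independent and the geometric factor cancels. It then remains to apply the Gaussian tail bound $P(|Z_t| > u) \le \exp(-u^2/(2a^2))$ for $Z_t \sim \mathcal{N}(0,a^2)$ and substitute $c_2 = a\sqrt{2\log(T/2)}$, where the argument of the logarithm is chosen precisely to drive the tail probability down to the claimed level.

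For part (b), the key step is a variance computation. Conditioned on $\mathcal{H}_{t-1}$, the perturbation $x^T W_{t,\lambda}^{-1}\tilde{W}_{t,\lambda}^{1/2} Z^{(t)}$ is a linear image of $Z^{(t)} \sim \mathcal{N}(0_d, a^2 I_d)$, hence a mean-zero univariate Gaussian, and I would compute its variance as $a^2\, x^T W_{t,\lambda}^{-1} \tilde{W}_{t,\lambda} W_{t,\lambda}^{-1} x$. Invoking the definition $V_t = W_{t,\lambda}\tilde{W}_{t,\lambda}^{-1} W_{t,\lambda}$, so that $V_t^{-1} = W_{t,\lambda}^{-1}\tilde{W}_{t,\lambda} W_{t,\lambda}^{-1}$, this variance collapses to $a^2 \|x\|_{V_t^{-1}}^2$. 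Thus, for each fixed $x$, $(\tilde{f}_t(x) - \langle x, \hat{\theta}^{wls}_t \rangle)/\|x\|_{V_t^{-1}} \sim \mathcal{N}(0,a^2)$, and the same Gaussian tail bound yields a per-arm failure probability. Unlike part (a), the perturbations seen by different arms are now distinct (decoupled) Gaussians, correlated only through the shared $Z^{(t)}$, so the universally quantified event must be enforced by a union bound over the at most $K$ arms in $\cX_t$; substituting $c_2 = a\sqrt{2\log(KT/2)}$ calibrates the per-arm bound to order $1/(KT)$ so the union bound recovers the stated $P(\bar{E}_t^{conc}) \le 1/T$.

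I expect the variance identity in part (b) to be the only non-routine step, since it is what forces the $\|\cdot\|_{V_t^{-1}}$ normalization to emerge and ties the argument to the specific weighted-least-squares geometry encoded by $W_{t,\lambda}$ and $\tilde{W}_{t,\lambda}$; everything else is a standard sub-Gaussian tail inequality plus, in part (b), a union bound. The structural contrast worth emphasizing is that the extra $\log K$ inside $c_2$ in part (b) — absent in part (a), where coupling lets the $\|x\|_{V_t^{-1}}$ factor cancel and removes the need for any union bound — is precisely the origin of the $(\log K)^{3/8}$ gap in the D-LinTS dynamic regret noted earlier. The one point I would double-check is the numerical constant: the two-sided tail is bounded by a multiple of $1/T$, so the $T/2$ and $KT/2$ placed inside the logarithms must be verified to pin the failure probability at the asserted $1/T$ rather than a larger constant multiple.
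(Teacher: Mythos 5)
Your proposal is correct and takes essentially the same route as the paper: in (a) the coupled scalar $Z_t$ makes the $\|x\|_{V_t^{-1}}$ factor cancel so the bad event is exactly $\{|Z_t|>c_2\}$ with no union bound, and in (b) your variance identity $a^2\,x^T W_{t,\lambda}^{-1}\tilde{W}_{t,\lambda}W_{t,\lambda}^{-1}x = a^2\|x\|_{V_t^{-1}}^2$ is precisely what the paper invokes as ``linear invariance'' before its (implicit) union bound over the $K$ arms. The constant you flagged is settled by the paper's Gaussian tail bound (Lemma \ref{lemma:gaussian}), which reads $P(|Z-\mu|>z\sigma)\le \tfrac{1}{2}\exp(-z^2/2)$; the extra factor $\tfrac{1}{2}$ relative to the bound $\exp(-u^2/(2a^2))$ you used converts your $2/T$ (resp.\ per-arm $2/(KT)$) into exactly the claimed $1/T$.
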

\begin{proof}%[Proof of Lemma \ref{lemma:nonst-conc}]
(a) We have $\tilde{f}_t(x) = \langle x, \hat{\theta}^{wls}_t \rangle + Z_t \| x \|_{V_t}^{-1}$ in D-RandLinUCB algorithm, and thus
\begin{align*}
    &P(\bar{E}_t^{conc}) = 1- P(E_t^{conc}) \\
    &=  1 - P(\forall x\in \cX_t; |\tilde{f}_t(x)-\langle x, \hat{\theta}^{wls}_t\rangle|  \le c_2 \|x\|_{V_t^{-1}}) \\
    & = 1- P(\forall x\in \cX_t; |Z_t|\cdot \|x\|_{V_t^{-1}} \le c_2\|x\|_{V_t^{-1}})\\
    & = 1- P(|Z_t| \le c_2) \because \text{Lemma} \;\ref{lemma:gaussian}\\
    &\le 1/T, \;\text{where} \; c_2 = a\sqrt{2\log (T/2)}.
\end{align*}
(b) Given history $\mathcal{H}_{t-1}$, we have $\tilde{f}_t(x) = \langle x, \hat{\theta}^{wls}_t \rangle + x^T W_{t,\lambda}^{-1} \tilde{W}_{t,\lambda}^{1/2} Z^{(t)}$ is equivalent to $\tilde{f}_t(x) = \langle x, \hat{\theta}^{wls}_t \rangle + Z_{t,x} \cdot \| x \|_{V_t}^{-1}$ where $Z_{t,x} \sim \mathcal{N}(0,a^2)$ by the linear invariant property of Gaussian distributions. Thus,
% Recall $V_t = W_{t,\lambda} \tilde{W}_{t,\lambda}^{-1} W_{t,\lambda}$. 
\begin{align*}
    &P(\bar{E}_t^{conc}) = 1- P(E_t^{conc}) \\
    &=  1 - P(\forall x\in \cX_t; |\tilde{f}_t(x)-\langle x, \hat{\theta}^{wls}_t\rangle|  \le c_2 \|x\|_{V_t^{-1}}) \\
    & = 1- P(\forall x\in \cX_t; |Z_{t,x}|\cdot \|x\|_{V_t^{-1}} \le c_2\|x\|_{V_t^{-1}})\\
    & = 1- P(\forall x\in \cX_t; |Z_{t,x}| \le c_2) \because \text{Lemma} \;\ref{lemma:gaussian}\\
    &\le 1/T, \;\text{where}\; c_2 = a\sqrt{2\log (KT/2)}.
\end{align*}\\[-3em]
\end{proof}
\begin{lemma}[Anti-concentration]\label{lemma:nonst-anti}
Given $\mathcal{H}_{t-1}$, \\
(a) D-RandLinUCB : $\tilde{f}_t(x) = \langle x, \hat{\theta}^{wls}_t \rangle + Z_t \| x \|_{V_t^{-1}}$, where $Z_t \sim \mathcal{N}(0,a^2)$. Then, $P(E_t^{anti}) \ge e^{-1/4}/(8\sqrt{\pi})$ when we have $a^2 = 14c_1^2$.\\
(b) D-LinTS : $\tilde{f}_t(x) = \langle x, \hat{\theta}^{wls}_t \rangle + x^T W_{t,\lambda}^{-1} \tilde{W}_{t,\lambda}^{1/2} Z^{(t)}$ where $Z^{(t)} \sim \mathcal{N}(0,a^2 I_d)$. If we assume $a^2 = 14c_1^2$, then $P(E_t^{anti}) \ge e^{-1/4}/(8\sqrt{\pi})$ .
\end{lemma}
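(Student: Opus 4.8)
The plan is to reduce both parts to a single one-dimensional Gaussian tail computation, exploiting the fact that the two algorithms differ only in how the perturbation is generated but agree in its marginal law at any fixed arm. The key observation is that, conditionally on $\mathcal{H}_{t-1}$, the perturbation evaluated at the fixed best arm $x_t^{\star}$ is a centered Gaussian with standard deviation exactly $a\|x_t^{\star}\|_{V_t^{-1}}$ in both cases. For D-RandLinUCB this is immediate, since $\tilde{f}_t(x_t^{\star}) - \langle x_t^{\star}, \hat{\theta}^{wls}_t\rangle = Z_t\,\|x_t^{\star}\|_{V_t^{-1}}$ with $Z_t \sim \mathcal{N}(0,a^2)$. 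For D-LinTS I would invoke the same linear invariance of Gaussians used in Lemma~\ref{lemma:nonst-conc}(b): the quantity $(x_t^{\star})^T W_{t,\lambda}^{-1}\tilde{W}_{t,\lambda}^{1/2} Z^{(t)}$ with $Z^{(t)}\sim\mathcal{N}(0_d,a^2 I_d)$ is distributed as $\mathcal{N}\big(0,\, a^2\,(x_t^{\star})^T W_{t,\lambda}^{-1}\tilde{W}_{t,\lambda} W_{t,\lambda}^{-1} x_t^{\star}\big)$, and since $V_t = W_{t,\lambda}\tilde{W}_{t,\lambda}^{-1} W_{t,\lambda}$ gives $V_t^{-1} = W_{t,\lambda}^{-1}\tilde{W}_{t,\lambda} W_{t,\lambda}^{-1}$, this variance equals $a^2\|x_t^{\star}\|_{V_t^{-1}}^2$. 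Hence both parts collapse to the same object and can be handled by one argument.

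Next I would rewrite the event. Writing the perturbation at $x_t^{\star}$ as $a\,\|x_t^{\star}\|_{V_t^{-1}}\, N$ with $N\sim\mathcal{N}(0,1)$, the defining inequality of $E_t^{anti}$ — that this perturbation exceed $c_1\|x_t^{\star}\|_{V_t^{-1}}$ — cancels the nonnegative scalar $\|x_t^{\star}\|_{V_t^{-1}}$ and reduces to $\{N > c_1/a\}$ (since $\lambda\ge 1$ forces $\lambda_{\min}(V_t)\ge 1$, the factor is strictly positive for $x_t^{\star}\neq 0$, and the degenerate case $x_t^{\star}=0$ is immaterial to the regret). With the prescribed calibration $a^2 = 14\,c_1^2$ this becomes $\{N > 1/\sqrt{14}\}$, a probability that no longer depends on $t$, $d$, $K$, or the data. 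It then remains to certify $P(N > 1/\sqrt{14}) \ge e^{-1/4}/(8\sqrt{\pi})$ by a standard explicit lower bound on the normal tail, of the same flavor as the Gaussian estimate used in Lemma~\ref{lemma:nonst-conc}.

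I expect this final step to be the only real obstacle, though it is calibration rather than depth. The difficulty is to select an anti-concentration inequality whose constants, after substituting $c_1/a = 1/\sqrt{14}$, simplify exactly to $e^{-1/4}/(8\sqrt{\pi})$ rather than to a messier expression; indeed the variance inflation factor $14$ is precisely what is engineered so that the exponent lands on $-1/4$ and the prefactor on $1/(8\sqrt{\pi})$. Conceptually, the larger $a$ is, the easier anti-concentration becomes because the threshold $c_1/a$ shrinks, but $a$ cannot be taken arbitrarily large: it reappears in the regret of Theorem~\ref{thm:1step-non} through $c_2 = a\sqrt{2\log(\cdot)}$ inside the factor $c_1+c_2$. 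The value $a^2 = 14\,c_1^2$ is the balance point, and the substance of the argument is checking that this single choice simultaneously controls the concentration event of Lemma~\ref{lemma:nonst-conc} and delivers the constant-probability lower bound demanded here.
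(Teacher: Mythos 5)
Your proposal is correct and follows essentially the same route as the paper: both parts are reduced, via the linear invariance of Gaussians (so that the D-LinTS perturbation at $x_t^{\star}$ has law $\mathcal{N}(0, a^2\|x_t^{\star}\|_{V_t^{-1}}^2)$, exactly as for D-RandLinUCB), to the single tail event $\{Z_t \ge c_1\}$, after which the paper applies its Lemma~\ref{lemma:gaussian} lower bound $P(|Z-\mu|>z\sigma)\ge \tfrac{1}{4\sqrt{\pi}}\exp(-\tfrac{7z^2}{2})$, halved by symmetry, with $z=c_1/a=1/\sqrt{14}$ to land precisely on $e^{-1/4}/(8\sqrt{\pi})$. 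The one step you flag as an obstacle is exactly this stated inequality, so nothing is missing.
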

\begin{proof}%[Proof of Lemma \ref{lemma:nonst-anti}]
(a) We denote perturbed expected reward as $\tilde{f}_t(x) = \langle x, \hat{\theta}^{wls}_t \rangle + Z_t \| x \|_{V_t}^{-1}$ for D-RandLinUCB. Thus,
\begin{align*}
    P(E_t^{anti})&= P(\tilde{f}_t(x_t^{\star})-\langle x_t^{\star}, \hat{\theta}^{wls}_t\rangle  > c_1 \|x_t^{\star}\|_{V_t^{-1}})\\
    & = P(Z_t \ge c_1) \ge \exp \big( -7c_1^2/(2a^2) \big)/(8\sqrt{\pi}) \\
    &= e^{-1/4}/(8\sqrt{\pi})\quad \text{where } a^2 = 14c_1^2.
\end{align*}
(b) In the same way as the proof of Lemma \ref{lemma:nonst-conc} (b), $\tilde{f}_t(x) = \langle x, \hat{\theta}^{wls}_t \rangle + x^T W_{t,\lambda}^{-1} \tilde{W}_{t,\lambda}^{1/2} Z^{(t)}$ is equivalent to $\tilde{f}_t(x) = \langle x, \hat{\theta}^{wls}_t \rangle + Z_{t,x} \cdot \| x \|_{V_t}^{-1}$ where $Z_{t,x} \sim \mathcal{N}(0,a^2)$. Thus,
\begin{align*}
    &P(E_t^{anti})= P(\tilde{f}_t(x_t^{\star})-\langle x_t^{\star}, \hat{\theta}^{wls}_t\rangle  > c_1 \|x_t^{\star}\|_{V_t^{-1}})\\
    & = P(Z_{t,x_t^{\star}} \ge c_1) \ge \exp \big( -7c_1^2/(2a^2) \big)/(8\sqrt{\pi}) \\
    &= e^{-1/4}/(8\sqrt{\pi})\quad \text{where } a^2 = 14c_1^2.
\end{align*}\\[-3em]
\end{proof}
\subsubsection{Dynamic Regret}
The dynamic regret bound of general randomized algorithm is stated below.
\begin{theorem}[Dynamic Regret]\label{thm:regret-non}
Assume we have $c_1, c_2\ge 1$ satisfying $P(E^{wls}) \ge 1-p_1$, $P(E_t^{conc}) \ge 1-p_2$, and $P(E_t^{anti}) \ge p_3$, and $c_3 = 2d \log(\frac{1}{\gamma}) + 2\frac{d}{T} \log (1+\frac{1}{d\lambda(1-\gamma)})$. Let $A$ be an algorithm that chooses arm $X_t = \arg\max_{\cX_t} \tilde{f}_t(x)$ at time $t$. The expected dynamic regret of A is bounded as for any integer $D>0$,
\begin{align*}
&E[R(T)] \le \;(c_1+ c_2) \big( 1+ \frac{2}{p_3-p_2}\big) \sqrt{c_3 T }  \\
     & + T (p_1+p_2) + d + 2\sqrt{\frac{d}{\lambda}} D^{3/2} B_T  + \frac{4}{\lambda} \frac{\gamma^D}{1-\gamma}T.
\end{align*}
\end{theorem}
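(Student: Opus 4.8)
The plan is to bound the two pieces of the regret decomposition already displayed above separately: the cumulative surrogate regret $\sum_{t=1}^T E[\langle x_t^\star - X_t, \bar\theta_t\rangle]$ and the cumulative bias $2\sum_{t=1}^T \|\theta_t^\star - \bar\theta_t\|_2$. The first piece is handled by a per-round argument followed by a potential/Cauchy--Schwarz step, and is essentially algorithm-agnostic in that it uses only the probabilities $p_1,p_2,p_3$ and the geometry of $V_t$; the second piece is purely a statement about how well the surrogate parameter $\bar\theta_t$ tracks the drifting truth, and is where the non-stationarity genuinely enters.

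For the surrogate regret I would first split on the global event $E^{wls}$. On $E^{wls}$, Theorem \ref{thm:1step-non} bounds each conditional instantaneous surrogate regret by $p_2 + (c_1+c_2)(1+\frac{2}{p_3-p_2})E_t[\min(1,\|X_t\|_{V_t^{-1}})]$, so summing over $t$ contributes $Tp_2$ together with $(c_1+c_2)(1+\frac{2}{p_3-p_2})\sum_t E[\min(1,\|X_t\|_{V_t^{-1}})]$. On the complement $\bar E^{wls}$ I would bound the total regret crudely by $2T$ and use $P(\bar E^{wls})\le p_1$; since $\bar E^{wls}$ is a single event shared across all rounds, this contributes only $2T\,P(\bar E^{wls}) \le 2Tp_1$, which together with the $Tp_2$ above yields the $T(p_1+p_2)$ term. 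It then remains to control $\sum_t E[\min(1,\|X_t\|_{V_t^{-1}})]$, which after Cauchy--Schwarz, $\sum_t \min(1,\|X_t\|_{V_t^{-1}}) \le \sqrt{T\sum_t \min(1,\|X_t\|_{V_t^{-1}}^2)}$, and the discounted elliptical-potential bound $\sum_{t=1}^T \min(1,\|X_t\|_{V_t^{-1}}^2) \le c_3 T$ gives the leading $\sqrt{c_3 T}$-type exploration term. This potential bound is the $V_t = W_{t,\lambda}\tilde W_{t,\lambda}^{-1}W_{t,\lambda}$ analogue of the determinant--trace argument of \citet{russac2019weighted}, and its non-telescoping nature (caused by the exponential weights) is exactly what forces the extra $2d\log(1/\gamma)$ inside $c_3$.

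For the bias the starting point is the elementary identity $\theta_t^\star - \bar\theta_t = W_{t,\lambda}^{-1}\sum_{l=1}^{t-1}\gamma^{-l}X_lX_l^\top(\theta_t^\star - \theta_l^\star)$, which follows by writing $\theta_t^\star = W_{t,\lambda}^{-1}W_{t,\lambda}\theta_t^\star$ and subtracting the definition of $\bar\theta_t$, the regularization terms cancelling. Writing $\theta_t^\star - \theta_l^\star = \sum_{p=l}^{t-1}(\theta_{p+1}^\star - \theta_p^\star)$ and setting $\nu_p = \|\theta_{p+1}^\star - \theta_p^\star\|_2$ with $\sum_p \nu_p = B_T$, I would split each round's inner sum at a free horizon $D$. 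For the tail $l \le t-1-D$ I would use the diameter bound $\|\theta_t^\star - \theta_l^\star\|_2 \le 2$ together with $\|W_{t,\lambda}^{-1}\gamma^{-l}X_lX_l^\top\|_{op} \le \gamma^{t-1-l}/\lambda$ (from $W_{t,\lambda} \succeq \lambda\gamma^{-(t-1)}I_d$); summing the geometric tail over $l$ and then over $t$ produces exactly the $\frac{4}{\lambda}\frac{\gamma^D}{1-\gamma}T$ term. For the recent window $t-D \le l \le t-1$ I would swap the order of summation to factor out $\theta_{p+1}^\star-\theta_p^\star$, control the resulting partial Gram matrix through $\sum_{l} \gamma^{-l}X_lX_l^\top \preceq W_{t,\lambda}$, and apply Cauchy--Schwarz across the window of length $D$: each step costs a factor $\sqrt{D/\lambda}$ and there are $D$ of them, which produces the $D^{3/2}$ growth, with the $\sqrt{d}$ entering through a trace/Frobenius relaxation of the operator bound. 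Summed over $t$ this gives $2\sqrt{d/\lambda}\,D^{3/2}B_T$, while the remaining lower-order $d$ accounts for the first rounds in which $\|X_t\|_{V_t^{-1}}$ cannot be discounted.

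The main obstacle is this bias (tracking) analysis: unlike the stationary case it does not vanish, and obtaining the precise dependence $\sqrt{d/\lambda}\,D^{3/2}B_T + \frac{\gamma^D}{\lambda(1-\gamma)}T$ requires carefully deciding which terms to bound by the local variation $\nu_p$ versus the global diameter, and then doing the double-sum bookkeeping so that the window length $D$ appears with the correct exponent. Keeping $D$ free is essential, since the final $\tilde O(d^{7/8}B_T^{1/4}T^{3/4})$ rate is obtained downstream by optimizing the trade-off between the increasing $D^{3/2}B_T$ term and the decreasing $\gamma^D T$ term (and $\gamma$ against $c_3 T$). A secondary technical point is establishing the discounted potential bound with the stated $c_3$, since the exponential weights break the usual telescoping of $\det V_t$ used in the stationary elliptical-potential lemma.
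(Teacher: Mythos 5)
Your proposal follows essentially the same route as the paper's proof: the identical decomposition into surrogate regret plus bias, with the surrogate part handled by Theorem \ref{thm:1step-non}, an event split on $E^{wls}$, Cauchy--Schwarz, and the discounted potential bound of Lemma \ref{lemma:norm-non}, and the bias part handled by splitting each round's sum at a window of length $D$, a geometric tail bound from $W_{t,\lambda} \succeq \lambda\gamma^{-(t-1)}I_d$, telescoping the drift, and an operator-norm bound of order $\sqrt{dD/\lambda}$ on $W_{t,\lambda}^{-1}\sum_{l=t-D}^{m}\gamma^{-l}X_lX_l^T$ obtained precisely by the window Cauchy--Schwarz and trace argument you sketch (the paper's Lemma \ref{lemma:eigenmax}). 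The only differences are cosmetic, e.g.\ your bad-event contribution is $2Tp_1$ rather than $Tp_1$, a constant-factor slack on a lower-order term.
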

\begin{proof}
The dynamic regret bound is decomposed into two terms, $(A)$ expected surrogate regret and $(B)$ bias arising from time variation on true parameter,
\begin{align*}
    E[R(T)] &\le  \sum_{t=1}^T E[ \langle x_t^{\star}-X_t, \bar{\theta}_t \rangle] + 2  \sum_{t=1}^T \| \theta_t^{\star} - \bar{\theta}_t\|_2.
\end{align*}
The expected surrogate regret term $(A)$ is bounded by
\begin{align*}
   &%\le \sum_{t=d+1}^T E[ \langle x_t^{\star}-X_t, \bar{\theta}_t \rangle] + d \\
    \sum_{t=d+1}^T E [\langle x_t^{\star}-X_t, \bar{\theta}_t \rangle I\{E^{wls}\}] + T \cdot P(\bar{E}^{wls}) + d\\
    & \le (c_1+ c_2) \sum_{t=1}^T \big( 1+ \frac{2}{p_3-p_2}\big) E_t \big[\|X_t\|_{V_t^{-1}} \big] \\
    & + T(p_1+p_2) +d\\
    & \le (c_1+ c_2) \sum_{t=1}^T \big( 1+ \frac{2}{p_3-p_2}\big) E_t \big[\min (1,\|X_t\|_{V_t^{-1}}) \big] \\
    & + T(p_1+p_2) +d\\
    % & \le \sum_{t=d+1}^T E [\langle x_t^{\star}-X_t, \bar{\theta}_t \rangle I\{E^{wls}\}] + T p_1 + d \\
    & \le(c_1+ c_2) \big( 1+ \frac{2}{p_3-p_2}\big) \sqrt{c_3 T } + T(p_1+p_2) +d.
\end{align*}
The first inequality holds due to Theorem \ref{thm:1step-non}. . The second inequality works because both dynamic regret and surrogate regret are upper bounded by $2T$ and $c_1 + c_2 \ge 2$. Also, the last inequality holds by Lemma \ref{lemma:norm-non} in Appendix \ref{app:reg-non}.
For any integer $D>0$, the bias term $(B)$ is bounded as
\begin{align*} 
    &(B) %= 2\sum_{t=1}^T \| \theta_t^{\star} - \bar{\theta}_t\|_2  \\
    = 2\sum_{t=1}^T \| W_{t,\lambda}^{-1} \sum_{l=1}^{t-1} \gamma^{-l}X_l X_l^T (\theta_l^{\star} - \theta_t^{\star})\|_2\\
     &\le  2\sum_{t=1}^T\| W_{t,\lambda}^{-1} \sum_{l=t-D}^{t-1} \gamma^{-l}X_l X_l^T (\theta_l^{\star} - \theta_t^{\star}) \|_2 \\
    &+ 2\sum_{t=1}^T\| W_{t,\lambda}^{-1} \sum_{l=1}^{t-D-1} \gamma^{-l}X_l X_l^T (\theta_l^{\star} - \theta_t^{\star})\|_2\\
    &\le 2\sum_{t=1}^T\sum_{m=t-D}^{t-1} \| W_{t,\lambda}^{-1} \sum_{l = t-D}^{m}\gamma^{-l} X_l X_l^T  (\theta_m^{\star} - \theta_{m+1}^{\star}) \|_2 \\
    &+\sum_{t=1}^T\frac{2}{\lambda}\|\sum_{l=1}^{t-D-1} \gamma^{t-l-1}X_l X_l^T (\theta_l^{\star} - \theta_t^{\star})\|_2\\
    &\le 2\sqrt{\frac{dD}{\lambda}} \sum_{t=1}^T \sum_{m=t-D}^{t-1}  \| \theta^{\star}_m - \theta^{\star}_{m+1}\|_2 + \frac{4}{\lambda} \frac{\gamma^D}{1-\gamma}T \\
    & \le 2\sqrt{\frac{d}{\lambda}}D^{3/2} B_T  + \frac{4}{\lambda} \frac{\gamma^D}{1-\gamma}T.
\end{align*}
The second inequality holds by interchanging the order of summations and $W_{t,\lambda}^{-2} \preccurlyeq (\frac{\gamma^{t-1}}{\lambda})^2I_d$. The second last inequality is derived from the fact that for $t-D \le m \le t-1$, $\lambda_{\max} \Big( W_{t,\lambda}^{-1} \sum_{l=t-D}^m \gamma^{-l} X_l X_l^T \Big) \le 1$.
\end{proof}
With the optimal choice of $c_1, c_2$ and $a$ derived from Lemma \ref{lemma:wls-non}-\ref{lemma:nonst-anti}, the dynamic regret bounds of D-RandLinUCB and D-LinTS are stated below.
\begin{corollary}[Dynamic Regret of D-RandLinUCB]\label{coro:non}
Suppose
\begin{align*}
    &c_1 = \sqrt{2 \log T + d \log (1+ \frac{1- \gamma^{2(T-1)}}{
\lambda d(1-\gamma^2)})} + \lambda^{1/2},\\
    &c_2 = a\sqrt{2\log (T/2)}, \text{and}\; a^2  = 14 c_1^2.
\end{align*}
Let A be D-RandLinUCB (Algorithm \ref{alg:alg2}). If $B_T$ is known, then with optimal choice of $D = {\log T}/(1-\gamma)$ and $\gamma = 1- d^{-\frac{1}{4}}B_T^{\frac{1}{2}}T^{-\frac{1}{2}}$, the expected dynamic regret of A is asymptotically upper bounded by $\mathcal{O} ( d^{\frac{7}{8}} B_T^{\frac{1}{4}} T^{\frac{3}{4}})$ as $T \rightarrow \infty$.

If $B_T$ is unknown, D-RandLinUCB together with Bandits-over-Bandits mechanism enjoys the expected dynamic regret of $\mathcal{O} ( d^{\frac{7}{8}} B_T^{\frac{1}{4}} T^{\frac{3}{4}})$.
\end{corollary}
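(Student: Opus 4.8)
The plan is to instantiate the general dynamic regret bound of Theorem~\ref{thm:regret-non} with the D-RandLinUCB constants and then optimize over $\gamma$ and the integer $D$. First I would read off the three event probabilities: Lemma~\ref{lemma:wls-non} gives $p_1=1/T$; Lemma~\ref{lemma:nonst-conc}(a), under $c_2=a\sqrt{2\log(T/2)}$, gives $p_2=1/T$; and Lemma~\ref{lemma:nonst-anti}(a), under $a^2=14c_1^2$, gives $p_3=e^{-1/4}/(8\sqrt{\pi})$, a positive absolute constant. Because $p_3-p_2\to p_3=\Theta(1)$, the amplification factor $1+2/(p_3-p_2)$ is $\Theta(1)$; the additive pieces $T(p_1+p_2)=2$ and $+d$ are $O(d)$ and hence asymptotically negligible. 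The bound therefore reduces to a competition between a surrogate (elliptical-potential) term and a parameter-variation bias term.

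Next I would track the $\gamma$-dependence of the surviving constants. With the stated $c_1$, the argument of its square root is $O(d\log T)$ once one uses $1-\gamma^2\asymp 1-\gamma$, so $c_1=\tilde{O}(\sqrt{d})$ and hence $a$, $c_2$, and $c_1+c_2$ are all $\tilde{O}(\sqrt{d})$. For the potential constant, $\log(1/\gamma)\asymp 1-\gamma$ near $\gamma=1$ gives $c_3=\Theta(d(1-\gamma))$ up to a lower-order $O(d\log T/T)$ piece. The discounted elliptical-potential estimate $\sum_t\|X_t\|_{V_t^{-1}}^2=\tilde{O}(d(1-\gamma)T)$ together with Cauchy--Schwarz (Lemma~\ref{lemma:norm-non}) controls the potential sum by $\tilde{O}(\sqrt{d(1-\gamma)}\,T)$, so the surrogate term is $\tilde{O}(\sqrt{d}\cdot\sqrt{d(1-\gamma)}\,T)=\tilde{O}(d(1-\gamma)^{1/2}T)$. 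For the bias, I would set $D=\log T/(1-\gamma)$; the elementary inequality $\gamma^{1/(1-\gamma)}\le e^{-1}$ yields $\gamma^{D}\le 1/T$, so $\frac{4}{\lambda}\frac{\gamma^{D}}{1-\gamma}T\le\frac{4}{\lambda(1-\gamma)}=\tilde{O}(T^{1/2})$ is lower order, leaving the dominant bias $2\sqrt{d/\lambda}\,D^{3/2}B_T=\tilde{O}(\sqrt{d}\,(1-\gamma)^{-3/2}B_T)$.

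It then remains to balance the two survivors. Writing $g=1-\gamma$, I equate $d\,g^{1/2}T=\sqrt{d}\,g^{-3/2}B_T$, i.e.\ $\sqrt{d}\,g^{2}T=B_T$, which solves to $g=d^{-1/4}B_T^{1/2}T^{-1/2}$, exactly the prescribed $\gamma=1-d^{-1/4}B_T^{1/2}T^{-1/2}$; back-substitution gives the common value $d\,g^{1/2}T=d^{7/8}B_T^{1/4}T^{3/4}$, so $E[R(T)]=\tilde{O}(d^{7/8}B_T^{1/4}T^{3/4})$ as $T\to\infty$. The delicate point in this half is the bias exponent: the factor $D^{3/2}$ (the $D$ from the double sum $\sum_t\sum_{m=t-D}^{t-1}$ in the proof of Theorem~\ref{thm:regret-non}, on top of the $\sqrt{D}$ in its $\sqrt{dD/\lambda}$ prefactor) is what forces the $T^{3/4}$ rate rather than the older $T^{2/3}$, and the balance is only correct if this power is tight.

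For unknown $B_T$ the main obstacle is the meta-learning step. The plan is to keep D-RandLinUCB as a base routine and run a Bandit-over-Bandit master (an adversarial MAB such as EXP3) over a geometric grid of candidate discount factors bracketing the oracle value $1-d^{-1/4}B_T^{1/2}T^{-1/2}$, selecting one $\gamma$ per block. One then bounds $E[R(T)]$ by the base regret at the grid point closest to the oracle $\gamma$ plus the master's learning regret, and the hard part is showing that both the discretization error and the master's regret are dominated by $d^{7/8}B_T^{1/4}T^{3/4}$, so that obliviousness to $B_T$ costs only lower-order terms; this adapts the BOB analysis of \citep{cheung2019hedging} to the randomized base algorithm and its surrogate-regret decomposition.
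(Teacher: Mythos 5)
Your proposal is correct and follows essentially the same route as the paper: instantiate Theorem~\ref{thm:regret-non} with $p_1=p_2=1/T$, $p_3=e^{-1/4}/(8\sqrt{\pi})$ from Lemmas~\ref{lemma:wls-non}--\ref{lemma:nonst-anti}, balance the surrogate term against the bias $2\sqrt{d/\lambda}\,D^{3/2}B_T$ with $D=\log T/(1-\gamma)$ to recover the prescribed $\gamma$ and the $d^{7/8}B_T^{1/4}T^{3/4}$ rate, and handle unknown $B_T$ by the same block-based EXP3 (Bandits-over-Bandits) argument as Appendix~\ref{app:bob}. You also correctly read the elliptical-potential contribution as $(c_1+c_2)\,T\sqrt{c_3}=\tilde{O}(d\sqrt{1-\gamma}\,T)$ (i.e., Cauchy--Schwarz applied to Lemma~\ref{lemma:norm-non}), which is the interpretation consistent with the paper's appendix derivation and the stated optimal $\gamma$, despite the theorem statement's ambiguous ``$\sqrt{c_3 T}$'' notation.
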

\begin{corollary}[Dynamic Regret of D-LinTS]\label{coro:non2}
Suppose
\begin{align*}
    &c_1  = \sqrt{2 \log T + d \log (1+ \frac{1- \gamma^{2(T-1)}}{
\lambda d(1-\gamma^2)})} + \lambda^{1/2},\\
    &c_2 = a\sqrt{2\log (KT/2)}, \text{and}\; a^2 = 14 c_1^2
\end{align*}
Let A be D-LinTS (Algorithm \ref{alg:alg3}). If $B_T$ is known, then with optimal choice of $D = {\log T}/(1-\gamma)$ and $\gamma = 1- d^{-\frac{1}{4}}(\log K)^{-\frac{1}{4}}B_T^{\frac{1}{2}}T^{-\frac{1}{2}}$, the expected dynamic regret of A is asymptotically upper bounded by $\mathcal{O} ( d^{\frac{7}{8}}(\log K)^{\frac{3}{8}} B_T^{\frac{1}{4}} T^{\frac{3}{4}})$ as $T \rightarrow \infty$.

If $B_T$ is unknown, D-LinTS together with Bandits-over-Bandits mechanism enjoys the expected dynamic regret of $\mathcal{O} ( d^{\frac{7}{8}} (\log K)^{\frac{3}{8}} B_T^{\frac{1}{4}} T^{\frac{3}{4}})$.
\end{corollary}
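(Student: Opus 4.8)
The plan is to obtain Corollary~\ref{coro:non2} by specializing the general dynamic regret bound of Theorem~\ref{thm:regret-non} to the D-LinTS perturbation and then optimizing the two free knobs, the discount factor $\gamma$ and the integer $D$. The argument runs exactly parallel to Corollary~\ref{coro:non} for D-RandLinUCB; the sole structural difference is that the concentration constant supplied by Lemma~\ref{lemma:nonst-conc}(b) is $c_2 = a\sqrt{2\log(KT/2)}$ rather than $a\sqrt{2\log(T/2)}$, because the Gaussian perturbations are decoupled across the $K$ arms. First I would collect the inputs required by Theorem~\ref{thm:regret-non}: Lemma~\ref{lemma:wls-non} gives $p_1 = 1/T$ and $c_1 = \tilde{O}(\sqrt{d})$ (using $\lambda\ge 1$ and $1-\gamma^2 = \Theta(1-\gamma)$, with the remaining $\log$ factors absorbed into $\tilde{O}$); Lemma~\ref{lemma:nonst-conc}(b) gives $p_2 = 1/T$ and $c_2 = \sqrt{14}\,c_1\sqrt{2\log(KT/2)} = \tilde{O}(\sqrt{d\log K})$ once $a^2 = 14 c_1^2$; and Lemma~\ref{lemma:nonst-anti}(b) gives the constant anti-concentration probability $p_3 = e^{-1/4}/(8\sqrt{\pi}) = \Theta(1)$. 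Consequently $(c_1+c_2)\big(1+\tfrac{2}{p_3-p_2}\big) = \tilde{O}(\sqrt{d\log K})$ and the additive slack $T(p_1+p_2)+d = O(d)$ is lower order.

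Substituting these into Theorem~\ref{thm:regret-non} leaves three contributions whose sizes depend on $\gamma$ and $D$: the estimation term $\tilde{O}(\sqrt{d\log K})$ times the discounted norm sum, which through $c_3 = 2d\log(1/\gamma) + \tfrac{2d}{T}\log(1+\tfrac{1}{d\lambda(1-\gamma)}) = \Theta(d(1-\gamma))$ is increasing in $1-\gamma$ via $\sqrt{c_3}\propto\sqrt{1-\gamma}$; the drift term $2\sqrt{d/\lambda}\,D^{3/2}B_T$; and the forgetting term $\tfrac{4}{\lambda}\tfrac{\gamma^D}{1-\gamma}T$. The next step is to eliminate the forgetting term: choosing $D = \log T/(1-\gamma)$ gives $\gamma^D = e^{-D\log(1/\gamma)} \le e^{-(1-\gamma)D} = 1/T$, so that term collapses to $\tilde{O}(1/(1-\gamma))$, which I would verify is dominated at the final $\gamma$. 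With $D$ tied to $\gamma$ this way the drift term becomes $\tilde{O}(\sqrt{d}\,B_T(1-\gamma)^{-3/2})$, i.e.\ decreasing in $1-\gamma$, while the estimation term increases in $1-\gamma$. Balancing these two and solving for the crossover yields $1-\gamma = d^{-1/4}(\log K)^{-1/4}B_T^{1/2}T^{-1/2}$, exactly the stated choice, with common balanced value $\tilde{O}\big(d^{7/8}(\log K)^{3/8}B_T^{1/4}T^{3/4}\big)$; I would then confirm the leftover forgetting and $O(d)$ contributions are asymptotically negligible, delivering the claimed rate as $T\to\infty$.

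The only genuinely new bookkeeping relative to Corollary~\ref{coro:non} is tracking the $\sqrt{\log K}$ that enters through $c_2$: it multiplies the estimation term, so when $\gamma$ is balanced it shifts the optimal discount factor by a $(\log K)^{-1/4}$ factor and, fed back through the drift term $(1-\gamma)^{-3/2}$, emerges as the advertised $(\log K)^{3/8}$ penalty. I expect the main obstacle to be pinning down the precise $(1-\gamma)$- and $T$-scaling of the estimation term via the discounted elliptical-potential (norm) lemma underlying Theorem~\ref{thm:regret-non}, since it is exactly this scaling---balanced against the $D^{3/2}B_T$ drift rather than a milder drift---that forces the $T^{3/4}$ exponent, the locus of the correction over the earlier $T^{2/3}$ analyses. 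Finally, for unknown $B_T$ I would remove the dependence on the budget by running D-LinTS inside the Bandit-over-Bandit meta-procedure of \citet{cheung2019hedging}: a master algorithm plays in epochs over a grid of candidate discount factors, and a standard meta-regret analysis shows the overhead is of lower order, so the $\tilde{O}(d^{7/8}(\log K)^{3/8}B_T^{1/4}T^{3/4})$ rate is retained without prior knowledge of $B_T$.
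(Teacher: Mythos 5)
Your proposal is correct and follows essentially the same route as the paper: it specializes Theorem \ref{thm:regret-non} with the constants supplied by Lemmas \ref{lemma:wls-non}, \ref{lemma:nonst-conc}(b) and \ref{lemma:nonst-anti}(b), eliminates the forgetting term via $D = \log T/(1-\gamma)$, balances the estimation term against the drift term $2\sqrt{d/\lambda}\,D^{3/2}B_T$ to arrive at the stated $\gamma$ and the $d^{7/8}(\log K)^{3/8}B_T^{1/4}T^{3/4}$ rate, and invokes the Bandits-over-Bandits argument of Appendix \ref{app:bob} for unknown $B_T$. One clarification worth recording: your claimed crossover $1-\gamma = d^{-1/4}(\log K)^{-1/4}B_T^{1/2}T^{-1/2}$ is the one obtained when the discounted norm sum is bounded by $T\sqrt{c_3}$ (which is what Cauchy--Schwarz plus Lemma \ref{lemma:norm-non} actually yield, and is the reading under which the paper's ``optimal choice'' of $\gamma$ is a genuine balance), rather than the literal $\sqrt{c_3 T}$ appearing in the display of Theorem \ref{thm:regret-non}; under that reading your arithmetic, including the $(\log K)^{3/8}$ bookkeeping, checks out.
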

The detailed proof of Theorem \ref{thm:regret-non} and Corollary \ref{coro:non} and \ref{coro:non2} are deferred to Appendix \ref{app:reg-non}. The details for the case of unknown $B_T$ are deferred to Appendix \ref{app:bob}.

Note that exponentially discounting weights can be replaced by sliding window strategy or restarted strategy to accommodate to evolving environment. We can construct sliding-window randomized LinUCB (SW-RandLinUCB) and sliding-window linear Thompson sampling (SW-LinTS), or restarting randomized LinUCB (Restart-RandLinUCB) and restarting linear Thompson sampling (Restart-LinTS) via two perturbation approaches, and they maintain the trade-off between oracle efficiency and theoretical guarantee. With unknown total variation $B_T$, we can also utilize Bandits-over-Bandits mechanism by applying the EXP3 algorithm over these algorithms with different window sizes \citep{cheung2019hedging} or epoch sizes \citep{zhao2020simple, zhao2021non}, respectively.

{\bf Trade-off between Oracle Efficiency and Minimax Optimality :}
Corollary \ref{coro:non} shows that D-RandLinUCB does not match the lower bound for dynamic regret, $\Omega (d ^{2/3}B_T^{1/3} T^{2/3})$, but it achieve the same dynamic regret bound as that of three non-randomized algorithms such as SW-LinUCB, D-LinUCB and Restart-LinUCB. 
However, D-RandLinUCB is computationally inefficient as D-LinUCB in large action space since the spectral norm of each action in terms of matrix $V_t^{-1}$ should be computed in every round $t$. In contrast, D-LinTS algorithm relies on offline optimization oracle access via perturbation and thus can be efficiently implemented in infinite-arm setting, and even contextual bandit setting. As a cost of its oracle efficiency, D-LinTS achieves the dynamic regret bound $(\log K)^{3/8}$ worse than that of D-RandLinUCB in finite-arm setting. There exist two variations in D-LinTS; algorithmic variation generated by perturbing an estimate $\hat{\theta}_t^{wls}$ and environmental variation induced by time-varying environments. Two variations are hard to distinguish from the learner's perspective, and thus the effect of algorithmic variation is alleviated by being partially absorbed in environmental variation. This is why D-LinTS and D-LinUCB produce $d^{3/8}$ gap of dynamic regret bounds with infinite set of arms which is less than $d^{1/2}$ gap between regret bounds of LinUCB and LinTS in the stationary environment.
% Note that exponentially discounting weights can be replaced by sliding window idea to accommodate to evolving environment so that Sliding-Window Linear UCB (SW-LinUCB) was proposed in the work of \citet{cheung2019hedging}. We can construct Sliding-Window Randomized LinUCB (SW-RandLinUCB) and Sliding-Window Linear Thompson Sampling (SW-LinTS) via two perturbation approaches, and they maintain the trade-off between oracle efficiency and minimax optimality. With unknown total variation $B_T$, we can also design Bandit-over-Bandit (BOB) algorithm by applying the EXP3 algorithm over SW-RandLinUCB and SW-LinTS with different window sizes \citep{cheung2019hedging}.
% , and both algorithms would achieve the dynamic regret bound $\cO (d ^{2/3}B_T^{1/3} T^{3/4})$.

\section{NUMERICAL EXPERIMENT}\label{sec:exp}
\begin{figure*}[t]
\centering
\begin{subfigure}{.33\textwidth}
  \centering
  \includegraphics[width=\linewidth]{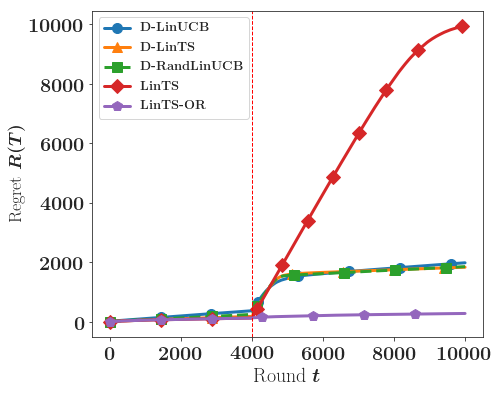}
  \caption{$(d,K) =(10, 10)$}
  \label{fig:fig2a}
\end{subfigure}
\begin{subfigure}{.33\textwidth}
  \centering
  \includegraphics[width=\linewidth]{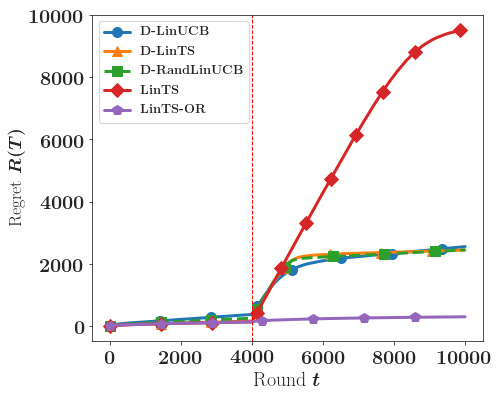}
  \caption{$(d,K) =(20, 10)$}
  \label{fig:fig2c}
\end{subfigure}
\begin{subfigure}{.33\textwidth}
  \centering
  \includegraphics[width=\linewidth]{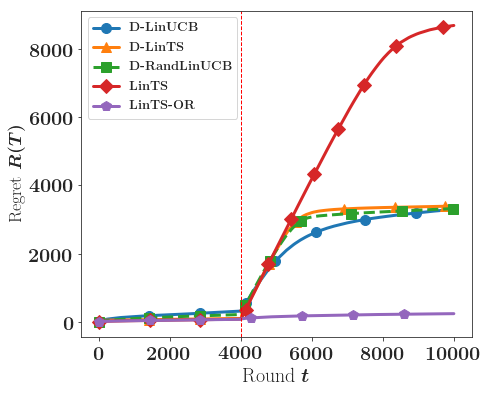}
  \caption{$(d,K) =(50, 10)$}
  \label{fig:fig2e}
\end{subfigure}%
\\
\begin{subfigure}{.33\textwidth}
  \centering
  \includegraphics[width=\linewidth]{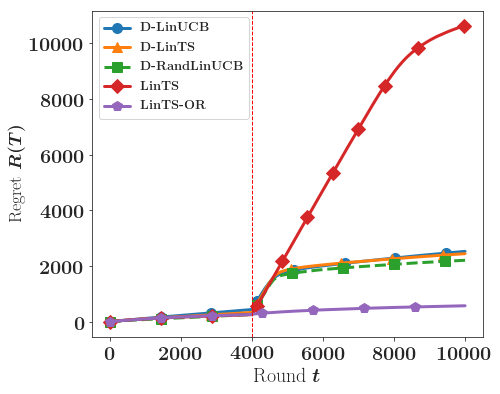}
  \caption{$(d,K) =(10, 100)$}
  \label{fig:fig2b}
\end{subfigure}
\begin{subfigure}{.33\textwidth}
  \centering
  \includegraphics[width=\linewidth]{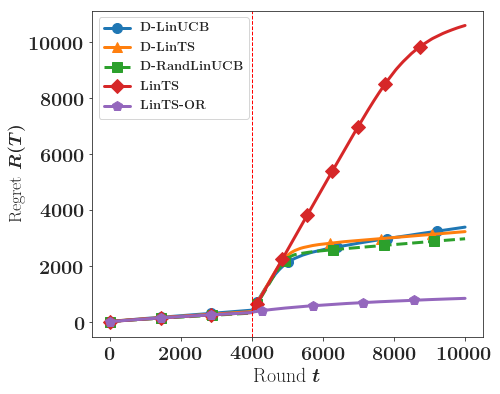}
  \caption{$(d,K) =(20, 100)$}
  \label{fig:fig2d}
\end{subfigure}%
\begin{subfigure}{.33\textwidth}
  \centering
  \includegraphics[width=\linewidth]{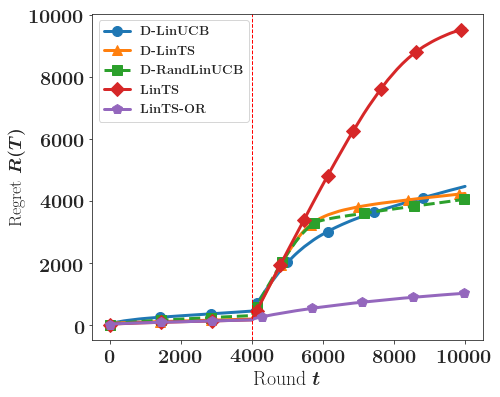}
  \caption{$(d,K) =(50, 100)$}
  \label{fig:fig2f}
\end{subfigure}
\caption{Plots of cumulative dynamic regret for $d = 10, 20$, and $50$ and $K=10$, and $100$.}
\label{fig:fig}
\end{figure*}
In simulation studies\footnote{\url{https://github.com/baekjin-kim/NonstationaryLB}}, we evaluate the empirical performance of D-RandLinUCB and D-LinTS. We use a sample of 30 days of Criteo live traffic data \citep{diemert2017attribution} by 10\% downsampling without replacement. Each line corresponds to one impression that was displayed to a user with contextual variables as well as information of whether it was clicked or not. We kept \textit{campaign} variable and categorical variables from \textit{cat1} to \textit{cat9} except for \textit{cat7}. We experiment with several dimensions $d = 10, 20, 50$ and the number of arms $K= 10, 100$. Among all one-hot coded contextual variables, $d$ feature variables were selected by Singular Value Decomposition for dimensionality reduction. We construct two linear models and the model switch occurs at time 4000. The parameter $\theta^{\star}$ in the initial model is obtained from linear regression model and we obtain true parameter $\theta^{\star}$ in the second model by switching the signs of 60\% of the components of $\theta^{\star}$. In each round, $K$ arms given to all algorithms are equally sampled from two separate pools of 10000 arms corresponding to clicked or not clicked impressions. The rewards are generated from linear model with additional Gaussian noise of variance $\sigma^2 =0.15$.

We compare randomized algorithms D-RandLinUCB and D-LinTS to discounted linear UCB (D-LinUCB) as a benchmark. Also, we compare them to linear Thompson sampling (LinTS) and oracle restart LinTS (LinTS-OR). An oracle restart knows about the change-point and restarts the algorithm immediately after the change. In D-RandLinUCB, we use truncated normal distribution with zero mean and standard deviation $2/5$ over $[0, \infty)$ as $\mathcal{D}$ to ensure that its randomly chosen confidence bound belongs to that of D-LinUCB with high probability. Also, we use non-inflated version by setting $a=1$ when implementing both LinTS and D-LinTS \citep{vaswani2019old}. The regularization parameter is $\lambda = 1$, the time horizon is $T = 10000$ and the cumulative dynamic regret of algorithms are averaged over 100 independent replications in Figure \ref{fig:fig}.

We observe the following patterns in Figure \ref{fig:fig}. First, two randomized algorithms, D-RandLinUCB and D-LinTS outperform the non-randomized one, D-LinUCB when action space is quite large ($K=100$) in figure \ref{fig:fig2b}, \ref{fig:fig2d}, and \ref{fig:fig2f}. In the setting where the number of arms is small ($K=10)$, however, non-randomized algorithm (D-LinUCB) performs better than two randomized algorithms once relatively high-dimension feature is considered (figure \ref{fig:fig2c} and \ref{fig:fig2e}), while three nonstationary algorithms show almost similar performance when feature is low-dimensional (figure \ref{fig:fig2a}).

Second, D-RandLinUCB always works better than D-LinTS in all scenarios. Though D-LinTS can enjoy oracle efficiency in computational aspect, it has slightly worse regret bound than D-RandLunUCB. The difference in theoretical guarantees can be empirically evaluated in this result. The poor performance of D-LinUCB in large action space is due to its very large confidence bound so that the issue regarding conservatism can be partially tackled by randomizing a confidence level in D-RandLinUCB.

Lastly, the interesting observation in figure \ref{fig:fig2f}, non-randomized algorithm D-LinUCB shows better performance in recovering a reliable estimator after experiencing a change point than other two competitors in the initial phase. It takes longer time for randomized algorithms to recover their performance. This is because the agent cannot distinguish which factor causes this nonstationarity it is experiencing: either randomness inherited from algorithm nature or environmental change. However, randomized algorithms eventually beat the non-randomized competitor in the final phase.

\section{CONCLUSION}
% We study the trade-off between theoretical guarantee and computational efficiency embedded in two design principles of randomized algorithms constructed (1) by replacing optimism with a simple randomization when deciding a confidence level in UCB-type algorithms, or (2) by adding the random perturbations to estimates where the learner has access to an offline optimization oracle.
For non-stationary linear bandits, we propose two randomized algorithms, Discounted Randomized LinUCB and Discounted Linear Thompson Sampling which are the first of their kind by replacing optimism with a simple randomization in UCB-type algorithms, or by adding the random perturbations to estimates, respectively. We analyzed their dynamic regret bounds and evaluated their empirical performance in a simulation study.

The existence of a randomized algorithm that enjoys both theoretical optimality and oracle efficiency is still open in stationary and non-stationary stochastic linear bandits. 
%From our experiments, the oracle-efficient algorithm, D-LinTS performs quite well despite of an extra logarithmic gap in number of arms compared to minimax-optimal dynamic regret. Thus, the open problem could be solved by simply proving a better bound for D-LinTS, or perhaps a different algorithm will be needed.
\vspace{-1pt}
\section*{ACKNOWLEDGEMENT}
\vspace{-1pt}
We acknowledge the support of NSF CAREER grant IIS-1452099 and the UM-LSA Associate Professor Support Fund.
\newpage
\section*{References}
\bibliography{paper}
% References follow the acknowledgements.  Use unnumbered third level
% heading for the references title.  Any choice of citation style is acceptable as long as you are consistent.
% J.~Alspector, B.~Gupta, and R.~B.~Allen  (1989). Performance of a
% stochastic learning microchip.  In D. S. Touretzky (ed.), {\it Advances
% in Neural Information Processing Systems 1}, 748-760.  San Mateo, Calif.:
% Morgan Kaufmann.
% F.~Rosenblatt (1962). {\it Principles of Neurodynamics.} Washington,
% D.C.: Spartan Books.
% G.~Tesauro (1989). Neurogammon wins computer Olympiad.  {\it Neural
% Computation} {\bf 1}(3):321-323.

\onecolumn
\appendix

\section{PROOF : NON-STATIONARY SETTING} \label{app:non}
\subsection{LEMMA}
\begin{lemma}[Concentration and Anti-Concentration of Gaussian distribution \citep{abramowitz1964handbook}]\label{lemma:gaussian}
Let $Z$ be the Gaussian random variable with mean $\mu$ and variance $\sigma^2$. For any $z>0$, 
\begin{align*}
\frac{1}{4\sqrt{\pi}}\exp(-\frac{7z^2}{2})  \le P(|Z-\mu|> z \sigma ) \le \frac{1}{2} \exp(-\frac{z^2}{2}).
\end{align*}
\end{lemma}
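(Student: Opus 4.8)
The plan is to reduce everything to the standard normal and then handle the two tail estimates separately. Setting $W = (Z-\mu)/\sigma$, we have $W \sim \mathcal{N}(0,1)$, and by symmetry $P(|Z-\mu| > z\sigma) = P(|W| > z) = 2\,\bar{\Phi}(z)$, where $\bar{\Phi}(z) = P(W > z) = \frac{1}{\sqrt{2\pi}}\int_z^\infty e^{-t^2/2}\,dt$. Both claimed inequalities thus become statements about the single function $\bar\Phi$, so it suffices to bound $\bar\Phi(z)$ from above and below by $\tfrac14 e^{-z^2/2}$-type and $\tfrac{1}{8\sqrt\pi}e^{-7z^2/2}$-type quantities and then double.

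For the upper bound I would use the standard sub-Gaussian (Chernoff-type) estimate. Define $\psi(z) = \tfrac12 e^{-z^2/2} - \bar\Phi(z)$; then $\psi(0)=0$ and $\psi'(z) = e^{-z^2/2}\bigl(\tfrac{1}{\sqrt{2\pi}} - \tfrac{z}{2}\bigr)$, from which one checks $\psi(z)\ge 0$ for all $z\ge 0$ (it increases near $0$, and its value stays nonnegative as $z\to\infty$ since $\bar\Phi(z)\sim \tfrac{1}{z\sqrt{2\pi}}e^{-z^2/2}$ is eventually dominated by $\tfrac12 e^{-z^2/2}$). This yields $\bar\Phi(z)\le \tfrac12 e^{-z^2/2}$, which gives the stated upper bound in the regime $z=\sqrt{2\log(T/2)}$ in which Lemma~\ref{lemma:gaussian} is actually invoked.

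For the lower bound I would split at a threshold, say $z_0=1$. For $z\le 1$, monotonicity of $\bar\Phi$ gives $\bar\Phi(z)\ge \bar\Phi(1)$, while $\tfrac{1}{8\sqrt\pi}e^{-7z^2/2}\le \tfrac{1}{8\sqrt\pi}$; since $\bar\Phi(1)\approx 0.159 > \tfrac{1}{8\sqrt\pi}\approx 0.07$, the inequality $\bar\Phi(z)\ge \tfrac{1}{8\sqrt\pi}e^{-7z^2/2}$ holds with room to spare. For $z>1$, I would invoke the classical lower tail bound $\bar\Phi(z)\ge \tfrac{1}{\sqrt{2\pi}}\tfrac{z}{z^2+1}e^{-z^2/2}$; comparing with the target and cancelling $e^{-z^2/2}$ reduces the claim to $\tfrac{1}{\sqrt{2\pi}}\tfrac{z}{z^2+1}e^{3z^2}\ge \tfrac{1}{8\sqrt\pi}$, and on $z>1$ the factor $e^{3z^2}\ge e^{3}$ dwarfs the rational prefactor (the left side already exceeds $4$ at $z=1$ and is increasing), so this holds comfortably. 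Doubling gives $P(|W|>z)\ge \tfrac{1}{4\sqrt\pi}e^{-7z^2/2}$.

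The main obstacle is the lower bound: one must produce a clean \emph{absolute} constant valid uniformly in $z$, and this is exactly what the deliberately loosened exponent $7z^2/2$ (in place of the sharp $z^2/2$) buys, since the extra $e^{3z^2}$ slack in the tail regime absorbs the polynomial prefactor of the classical bound, while the small-$z$ regime is dispatched by a crude monotonicity argument. The only subtlety worth flagging is that the two-sided upper bound is tight only for $z$ bounded away from $0$ (near $z=0$ one has $P(|W|>z)\to 1 > \tfrac12$); this causes no harm because Lemma~\ref{lemma:gaussian}'s upper bound is applied solely with the large value $z=\sqrt{2\log(T/2)}$.
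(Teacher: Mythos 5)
First, a point of reference: the paper never proves Lemma~\ref{lemma:gaussian} at all --- it is quoted as a known fact from \citet{abramowitz1964handbook} --- so your self-contained derivation is not competing against any in-paper argument. Your anti-concentration (lower bound) half is correct and complete: reducing to $\bar\Phi(z)\ge\frac{1}{8\sqrt{\pi}}e^{-7z^2/2}$, the split at $z=1$ works, since monotonicity gives $\bar\Phi(z)\ge\bar\Phi(1)\approx 0.159>\frac{1}{8\sqrt{\pi}}\approx 0.071$ on $(0,1]$, and for $z>1$ the Mills-ratio lower bound $\bar\Phi(z)\ge\frac{1}{\sqrt{2\pi}}\frac{z}{z^2+1}e^{-z^2/2}$ reduces the claim to $4\sqrt{2}\,\frac{z}{z^2+1}e^{3z^2}\ge 1$, which holds at $z=1$ (value $\approx 57$) and is increasing in $z$ because $\frac{1}{z}+6z-\frac{2z}{z^2+1}>0$. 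You also correctly identify, and deserve credit for catching, that the printed upper bound is \emph{false} near $z=0$: the left side of $P(|Z-\mu|>z\sigma)\le\frac12 e^{-z^2/2}$ tends to $1$ while the right side tends to $\frac12$; the constant $\frac12$ is the one-sided tail bound, and the doubling from the two-sided event has been dropped in the lemma statement.

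The genuine gap is in how you close the upper-bound half. From what you actually prove, $\bar\Phi(z)\le\frac12 e^{-z^2/2}$, you only obtain $P(|Z-\mu|>z\sigma)\le e^{-z^2/2}$ for every $z$, and this never sharpens to the stated $\frac12 e^{-z^2/2}$ no matter how large $z$ is --- so the assertion that your bound ``gives the stated upper bound in the regime $z=\sqrt{2\log(T/2)}$'' is a non sequitur: largeness of $z$ plays no role in the inequality you established. To genuinely recover the printed constant for the values of $z$ at which the lemma is invoked, you need a different estimate, e.g.\ the Mills-ratio upper bound $\bar\Phi(z)\le\frac{1}{z\sqrt{2\pi}}e^{-z^2/2}$, which yields $2\bar\Phi(z)\le\frac12 e^{-z^2/2}$ once $z\ge 4/\sqrt{2\pi}\approx 1.60$ --- satisfied by $z=\sqrt{2\log(T/2)}$ for all $T\ge 8$. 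Alternatively, one can keep your factor-one bound and note that it only degrades the concentration guarantee in Lemma~\ref{lemma:nonst-conc} from $P(\bar{E}_t^{conc})\le 1/T$ to $2/T$, which changes no regret bound beyond constants. Either repair is one line, but it must be stated; as written, your proof establishes the lower bound in full and the upper bound only in a form weaker by a factor of two, with the bridge to the stated inequality left unproved.
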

\subsection{PROOF OF THEOREM \ref{thm:regret-non}}\label{app:reg-non}
\begin{proof}[Proof of Theorem \ref{thm:regret-non}]
The dynamic regret bound is decomposed into two terms, $(A)$ expected surrogate regret and $(B)$ bias arising from variation on true parameter.
\begin{align*}
    E[R(T)] &= \sum_{t=1}^T E[ \langle x_t^{\star} - X_t, \theta^{\star}_t \rangle] = \sum_{t=1}^T E[ \langle x_t^{\star}-X_t, \bar{\theta}_t \rangle] + \sum_{t=1}^T E[\langle x_t^{\star}-X_t, \theta^{\star}_t - \bar{\theta}_t \rangle ]\\
    &\le  \sum_{t=1}^T E[ \langle x_t^{\star}-X_t, \bar{\theta}_t \rangle] + 2  \sum_{t=1}^T \| \theta_t^{\star} - \bar{\theta}_t\|_2 = (A) + (B)
\end{align*}
The expected surrogate regret term $(A)$ is bounded as,
\begin{align*}
   (A) & = \sum_{t=1}^T E[ \langle x_t^{\star}-X_t, \bar{\theta}_t \rangle] \le \sum_{t=d+1}^T E[ \langle x_t^{\star}-X_t, \bar{\theta}_t \rangle] + d \\
    &\le \sum_{t=d+1}^T E [\langle x_t^{\star}-X_t, \bar{\theta}_t \rangle I\{E^{wls}\}] + T \cdot P(\bar{E}^{wls}) + d \\
    & \le \sum_{t=d+1}^T E [\langle x_t^{\star}-X_t, \bar{\theta}_t \rangle I\{E^{wls}\}] + T p_1 + d \\
    & \le (c_1+ c_2) \big( 1+ \frac{2}{p_3-p_2}\big) E_t \big[\sum_{t=d+1}^T \min(1, \|X_t\|_{V_t^{-1}}) \big]+ T(p_1+p_2) +d \quad \because \text{Theorem } \ref{thm:1step-non} \\
    & \le(c_1+ c_2) \big( 1+ \frac{2}{p_3-p_2}\big) \sqrt{c_3 T } + T(p_1+p_2) +d \quad \because \text{Cauchy-Schwarz inequality} \;\&\; \text{Lemma } \ref{lemma:norm-non}
\end{align*}
\begin{lemma}[Corollary 4, \citet{russac2019weighted}]\label{lemma:norm-non}
For any $\lambda >0$, 
\begin{align*}
    \sum_{t=d+1}^T \min(1, \|X_t\|_{V_{t}^{-1}}^2) \le c_3 T
\end{align*}
where $c_3  = 2d \log(1/\gamma) + 2\frac{d}{T} \log (1+\frac{1}{d\lambda(1-\gamma)} )$.
\end{lemma}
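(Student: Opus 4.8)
The plan is to reduce the claim to the \emph{discounted} version of the elliptical-potential (log-determinant) argument. I would first apply the elementary inequality $\min(1,u)\le 2\log(1+u)$, valid for all $u\ge 0$, to each summand, which turns the target into
\[
\sum_{t=d+1}^T \min\!\bigl(1,\|X_t\|_{V_t^{-1}}^2\bigr)\;\le\;2\sum_{t=d+1}^T \log\!\bigl(1+\|X_t\|_{V_t^{-1}}^2\bigr),
\]
so it suffices to control the right-hand side by a telescoping determinant ratio, the factor $2$ being absorbed into the final constant. It is convenient to pass to the algorithm's running matrices $W^{(t)}=\gamma^{t-1}W_{t,\lambda}=\lambda I_d+\sum_{l<t}\gamma^{\,t-1-l}X_lX_l^\top$ and $\tilde W^{(t)}=\gamma^{2(t-1)}\tilde W_{t,\lambda}=\lambda I_d+\sum_{l<t}\gamma^{\,2(t-1-l)}X_lX_l^\top$, which leave $V_t=W^{(t)}(\tilde W^{(t)})^{-1}W^{(t)}$ unchanged and obey the update recursions of Algorithm~\ref{alg:alg2}, with $W^{(1)}=\tilde W^{(1)}=\lambda I_d$.

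The first key step is a congruence inequality that removes the sandwich form. Since $\gamma\in(0,1)$ and every weight $\gamma^{2(t-1-l)}$ in $\tilde W^{(t)}$ is dominated by the corresponding weight $\gamma^{t-1-l}$ in $W^{(t)}$, we have $\tilde W^{(t)}\preceq W^{(t)}$, hence $(\tilde W^{(t)})^{-1}\succeq (W^{(t)})^{-1}$; conjugating by $W^{(t)}$ gives $V_t=W^{(t)}(\tilde W^{(t)})^{-1}W^{(t)}\succeq W^{(t)}$ and therefore $\|X_t\|_{V_t^{-1}}^2\le \|X_t\|_{(W^{(t)})^{-1}}^2$. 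This reduces the weighted norm to the single discounted Gram matrix $W^{(t)}$, on which the telescoping is clean. From the update $W^{(t+1)}\succeq \gamma W^{(t)}+X_tX_t^\top$ and the matrix-determinant lemma, $\det W^{(t+1)}\ge \gamma^{d}\det W^{(t)}\bigl(1+\gamma^{-1}\|X_t\|_{(W^{(t)})^{-1}}^2\bigr)$; taking logarithms and summing telescopes into
\[
\sum_{t=1}^T \log\!\bigl(1+\|X_t\|_{(W^{(t)})^{-1}}^2\bigr)\;\le\;d\log(1/\gamma)\,T+\log\frac{\det W^{(T+1)}}{\det W^{(1)}},
\]
where the $d\log(1/\gamma)\,T$ term is exactly the per-round determinant ``leakage'' produced by discounting the $d$ eigenvalues of $W^{(t)}$ by $\gamma$, and we used $\gamma^{-1}\ge 1$ to drop the factor in front of the norm.

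It remains to bound the residual log-determinant ratio, the source of the second term of $c_3$. Since $\|X_t\|_2\le 1$, the discounted sum of rank-one updates gives $\mathrm{tr}\,W^{(T+1)}\le d\lambda+\sum_{s\ge 0}\gamma^{s}\le d\lambda+\tfrac{1}{1-\gamma}$, while $\det W^{(1)}=\lambda^d$; the inequality $\det A\le(\mathrm{tr}\,A/d)^d$ then yields $\log(\det W^{(T+1)}/\det W^{(1)})\le d\log\!\bigl(1+\tfrac{1}{d\lambda(1-\gamma)}\bigr)$. Chaining the opening inequality (with its factor $2$), the congruence bound, and the telescoping display reproduces $\sum_{t=d+1}^T\min(1,\|X_t\|_{V_t^{-1}}^2)\le 2d\log(1/\gamma)\,T+2d\log\!\bigl(1+\tfrac{1}{d\lambda(1-\gamma)}\bigr)=c_3T$. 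I expect the main obstacle to be the congruence step $V_t\succeq W^{(t)}$: it is precisely the point at which the doubly discounted matrix $\tilde W^{(t)}$ must be compared with the singly discounted $W^{(t)}$, and it is what lets the sandwiched norm $\|\cdot\|_{V_t^{-1}}$ be replaced by an ordinary discounted potential. Getting the discount bookkeeping right here (the single power of $\gamma$ in the telescoping, so that the leading term is $d\log(1/\gamma)$ rather than $2d\log(1/\gamma)$) is what makes the constant match $c_3$ exactly; the remaining steps are routine.
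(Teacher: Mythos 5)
Your proof is correct and complete: the domination $\tilde{W}^{(t)} \preceq W^{(t)}$ (term-by-term comparison of the discount weights), hence $V_t = W^{(t)}(\tilde{W}^{(t)})^{-1}W^{(t)} \succeq W^{(t)}$ by inversion and congruence, the determinant telescoping $\det W^{(t+1)} \ge \gamma^d \det W^{(t)}\bigl(1+\gamma^{-1}\|X_t\|^2_{(W^{(t)})^{-1}}\bigr)$ with per-round leakage $d\log(1/\gamma)$, the trace/AM--GM bound $\log\bigl(\det W^{(T+1)}/\det W^{(1)}\bigr) \le d\log\bigl(1+\tfrac{1}{d\lambda(1-\gamma)}\bigr)$, and the elementary inequality $\min(1,u)\le 2\log(1+u)$ all check out and chain to exactly $c_3 T$. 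One caveat on the comparison you were asked about: the paper does not prove this lemma at all---it imports it verbatim as Corollary 4 of \citet{russac2019weighted}---so there is no in-paper proof to diverge from; your argument is essentially a self-contained reconstruction of the cited source's proof (reduction of the sandwiched norm $\|\cdot\|_{V_t^{-1}}$ to the singly-discounted Gram matrix, followed by a discounted elliptical-potential argument), which is a useful addition precisely because the paper leaves it as a black box.
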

The bias term $(B)$ is bounded in terms of total variation, $B_T$. We first bound the individual bias term at time $t$. For any integer $D>0$,
\begin{align*} 
&\| \theta_t^{\star} - \bar{\theta}_t\|_2 = \| W_{t,\lambda}^{-1} \sum_{l=1}^{t-1} \gamma^{-l}X_l X_l^T (\theta_l^{\star} - \theta_t^{\star})\|_2 \\ 
    &  \le  \| W_{t,\lambda}^{-1} \sum_{l=t-D}^{t-1} \gamma^{-l}X_l X_l^T (\theta_l^{\star} - \theta_t^{\star}) \|_2 + \| W_{t,\lambda}^{-1} \sum_{l=1}^{t-D-1} \gamma^{-l}X_l X_l^T (\theta_l^{\star} - \theta_t^{\star})\|_2 \\
    &  \le  \| W_{t,\lambda}^{-1} \sum_{l=t-D}^{t-1} \gamma^{-l}X_l X_l^T \sum_{m = l}^{t-1} (\theta_m^{\star} - \theta_{m+1}^{\star}) \|_2 + \|\sum_{l=1}^{t-D-1} \gamma^{-l}X_l X_l^T (\theta_l^{\star} - \theta_t^{\star})\|_{W_{t,\lambda}^{-2}} 
\end{align*}
\begin{align*}
    &  \le  \| W_{t,\lambda}^{-1} \sum_{m=t-D}^{t-1} \sum_{l = t-D}^{m} \gamma^{-l}X_l X_l^T  (\theta_m^{\star} - \theta_{m+1}^{\star}) \|_2 + \frac{1}{\lambda}\|\sum_{l=1}^{t-D-1} \gamma^{t-l-1}X_l X_l^T (\theta_l^{\star} - \theta_t^{\star})\|_2 \\
    & \le \sum_{m=t-D}^{t-1} \| W_{t,\lambda}^{-1} \sum_{l = t-D}^{m}\gamma^{-l} X_l X_l^T  (\theta_m^{\star} - \theta_{m+1}^{\star}) \|_2 + \frac{2}{\lambda} \frac{\gamma^D}{1-\gamma} \\
    & \le \sum_{m=t-D}^{t-1} \lambda_{\max} \Big( W_{t,\lambda}^{-1} \sum_{l=t-D}^m \gamma^{-l} X_l X_l^T \Big) \| \theta^{\star}_m - \theta^{\star}_{m+1}\|_2 + \frac{2}{\lambda} \frac{\gamma^D}{1-\gamma}  \\
    & \le \sqrt{\frac{dD}{\lambda}} \sum_{m=t-D}^{t-1}  \| \theta^{\star}_m - \theta^{\star}_{m+1}\|_2 + \frac{2}{\lambda} \frac{\gamma^D}{1-\gamma} 
\end{align*}
The third inequality holds due to $W_{t,\lambda}^{-2} \preccurlyeq (\frac{\gamma^{t-1}}{\lambda})^2I_d$. The last inequality works due to $\lambda_{\max} \Big( W_{t,\lambda}^{-1} \sum_{l=t-D}^m \gamma^{-l} X_l X_l^T \Big) \le 1$ for $t-D \le m \le t-1$. By combining individual bias terms over $T$ rounds, we can derive the upper bound of bias term $(B)$ as,
\begin{align*} 
(B) & = 2\sum_{t=1}^T \| \theta_t^{\star} - \bar{\theta}_t\|_2 \\
& \le 2\sum_{t=1}^T \sqrt{\frac{dD}{\lambda}} \sum_{m=t-D}^{t-1}  \| \theta^{\star}_m - \theta^{\star}_{m+1}\|_2 + \frac{4}{\lambda} \frac{\gamma^D}{1-\gamma}T\\
&\le 2\sqrt{\frac{d}{\lambda}}D^{3/2} B_T  + \frac{4}{\lambda} \frac{\gamma^D}{1-\gamma}T
\end{align*}

\begin{lemma} \label{lemma:eigenmax}
For $t-D \le m \le t-1$,
$$
\lambda_{\max}\Big( W_{t,\lambda}^{-1} \sum_{l=t-D}^m \gamma^{-l} X_l X_l^T \Big) \le \sqrt{\frac{dD}{\lambda}}.
$$ 
\end{lemma}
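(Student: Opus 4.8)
The plan is to pass from the non-symmetric product $W_{t,\lambda}^{-1}\sum_{l=t-D}^{m}\gamma^{-l}X_lX_l^T$ to a symmetric positive semidefinite matrix whose spectrum is easy to control, and then to establish two independent upper bounds on its largest eigenvalue and combine them. Write $B=\sum_{l=t-D}^{m}\gamma^{-l}X_lX_l^T\succcurlyeq 0$ and $W=W_{t,\lambda}\succ 0$. Since $W^{-1}B$ is similar to the symmetric PSD matrix $C:=W^{-1/2}BW^{-1/2}$, the two share the same real, nonnegative spectrum, so it suffices to bound $\lambda_{\max}(C)$.

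\textbf{First bound ($\le 1$).} The summands of $B$ form a sub-collection of those in $\sum_{l=1}^{t-1}\gamma^{-l}X_lX_l^T$, so $B\preccurlyeq\sum_{l=1}^{t-1}\gamma^{-l}X_lX_l^T\preccurlyeq W_{t,\lambda}$. Conjugating by $W^{-1/2}$ preserves the Loewner order, hence $C\preccurlyeq I_d$ and $\lambda_{\max}(C)\le 1$. \textbf{Second bound ($\le D/\lambda$).} For a PSD matrix the largest eigenvalue is at most the trace, so $\lambda_{\max}(C)\le \mathrm{tr}(C)=\mathrm{tr}(W^{-1}B)=\sum_{l=t-D}^{m}\gamma^{-l}\,X_l^T W^{-1}X_l$. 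I would then invoke $W_{t,\lambda}\succcurlyeq \lambda\gamma^{-(t-1)}I_d$ (already used in the bias computation in the form $W_{t,\lambda}^{-2}\preccurlyeq(\gamma^{t-1}/\lambda)^2 I_d$), which gives $X_l^T W^{-1}X_l\le(\gamma^{t-1}/\lambda)\|X_l\|_2^2\le \gamma^{t-1}/\lambda$, and combine it with $\gamma^{-l}\le\gamma^{-(t-1)}$ for $l\le m\le t-1$ together with the observation that the sum has at most $D$ terms. Multiplying these out collapses the discount powers and yields $\mathrm{tr}(C)\le D/\lambda$.

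To finish, I combine the two estimates: since every eigenvalue of $C$ is nonnegative we have $\lambda_{\max}(C)^2\le \lambda_{\max}(C)\,\mathrm{tr}(C)$, whence $\lambda_{\max}(C)\le\sqrt{\lambda_{\max}(C)\cdot\mathrm{tr}(C)}\le\sqrt{1\cdot D/\lambda}=\sqrt{D/\lambda}\le\sqrt{dD/\lambda}$, using $d\ge 1$ in the last step. Equivalently one may just note $\lambda_{\max}(C)\le\min\{1,\,D/\lambda\}\le\sqrt{D/\lambda}$, since $\min\{1,x\}\le\sqrt{x}$ for $x\ge 0$.

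The only genuine subtlety is the passage to the symmetric matrix $C$, which is what legitimizes both the inequality $\lambda_{\max}\le\mathrm{tr}$ and the Loewner bound $C\preccurlyeq I_d$ for a product that is not itself symmetric; everything after that is bookkeeping of the discount factors $\gamma^{-l}$. I would flag that this argument in fact shows the stated bound has slack: the dimension factor $d$ is not needed, as $\sqrt{D/\lambda}$ already suffices, so I do not expect to require any sharper spectral inequality (e.g.\ a Frobenius-norm or eigenvalue Cauchy--Schwarz argument) to reach the claimed $\sqrt{dD/\lambda}$.
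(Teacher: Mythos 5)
Your proof is correct, but it takes a genuinely different route from the paper's. The paper works on the non-symmetric product directly: writing $B=\sum_{l=t-D}^{m}\gamma^{-l}X_lX_l^T$, it expresses $\lambda_{\max}\big(W_{t,\lambda}^{-1}B\big)$ as $\sup_{\|z\|_2=1}\big|z^TW_{t,\lambda}^{-1}Bz\big|$, applies Cauchy--Schwarz in the $W_{t,\lambda}^{-1}$ inner product together with $\|z\|_{W_{t,\lambda}^{-1}}\le\gamma^{(t-1)/2}/\sqrt{\lambda}$, then a triangle inequality and a Cauchy--Schwarz over the at most $D$ summands (the source of $\sqrt{D/\lambda}$), and finally a trace computation $\mathrm{tr}\big(W_{t,\lambda}^{-1}B\big)\le\mathrm{tr}(I_d)=d$ (the source of $\sqrt{d}$). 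You instead symmetrize to $C=W_{t,\lambda}^{-1/2}BW_{t,\lambda}^{-1/2}$, which has the same spectrum, get $\lambda_{\max}(C)\le 1$ from the Loewner comparison $B\preccurlyeq W_{t,\lambda}$, and bound the trace term-by-term as $\mathrm{tr}(C)\le D/\lambda$ (note your trace bound differs from the paper's: theirs compares $B$ to the full $W_{t,\lambda}$ inside the trace to get $d$; yours uses $W_{t,\lambda}\succcurlyeq\lambda\gamma^{-(t-1)}I_d$ and counts terms to get $D/\lambda$). Your route buys three things: it is shorter and uses only standard facts about positive semidefinite matrices; it avoids the paper's opening step, which equates the largest eigenvalue of a non-symmetric matrix with a quadratic-form supremum --- strictly, that is only an inequality in the needed direction (via a real eigenvector), though the paper writes it as an equality; and, as you flag, it shows the lemma has slack, since $\min\{1,D/\lambda\}\le\sqrt{D/\lambda}$ suffices and the factor $\sqrt{d}$ is unnecessary. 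If the bound $\lambda_{\max}\le 1$ were propagated into the bias term of Theorem \ref{thm:regret-non}, the $2\sqrt{d/\lambda}\,D^{3/2}B_T$ term would improve to $2DB_T$, since each difference $\|\theta_m^\star-\theta_{m+1}^\star\|_2$ is counted at most $D$ times; this does not affect the correctness of the paper's stated results, which only require the weaker bound.
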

\begin{proof}
Denote by $\mathbb{B}(1) = \{x | \|x\|_2 =1 \}$ the unit ball.
\begin{align*}
    \lambda_{\max}\Big( W_{t,\lambda}^{-1} \sum_{l=t-D}^m \gamma^{-l} X_l X_l^T \Big) &= \sup_{z \in \mathbb{B}(1)} \Big| z^T W_{t,\lambda}^{-1} \big( \sum_{l=t-D}^m \gamma^{-l} X_l X_l^T \big) z \Big| \\
    & = \Big| z_{\star}^T W_{t,\lambda}^{-1} \big( \sum_{l=t-D}^m \gamma^{-l} X_l X_l^T \big) z_{\star} \Big| \quad z_{\star} : \text{optimizer} \\
    & \le \|z_{\star}\|_{W_{t,\lambda}^{-1}} \Big\| \sum_{l=t-D}^m \gamma^{-l} X_l X_l^T z_{\star} \Big\|_{W_{t,\lambda}^{-1}} \\
    & \le \|z_{\star}\|_{W_{t,\lambda}^{-1}} \Big\| \sum_{l=t-D}^m  \gamma^{-l} X_l \|X_l\|_2 \|z_{\star}\|_2 \Big\|_{W_{t,\lambda}^{-1}} \\
    & \le \frac{ \gamma^{(t-1)/2}}{\sqrt{\lambda}} \Big\| \sum_{l=t-D}^m \gamma^{-l} X_l \Big\|_{W_{t,\lambda}^{-1}} 
    \le \frac{\gamma^{(t-1)/2}}{\sqrt{\lambda}} \sum_{l=t-D}^m  \Big\| \gamma^{-l} X_l \Big\|_{W_{t,\lambda}^{-1}} \\
    & \le \sqrt{\frac{D}{\lambda}} \sqrt{\gamma^{t-1} \sum_{l=t-D}^m \| \gamma^{-l} X_l \|^2_{W_{t,\lambda}^{-1}}} 
     \le \sqrt{\frac{dD}{\lambda}}
\end{align*}
In this proof, we utilized the fact that for any $x$, we have $\|x\|_{W_{t,\lambda}^{-1}} \le \|x\|_2/\sqrt{\lambda \gamma^{-(t-1)}} = \frac{\|x\|_2 \gamma^{(t-1)/2}}{\sqrt{\lambda}}$. The last step makes use of the following result: for any $m \in \{t-D, \cdots, t-1\}$,
\begin{align*}
    & \gamma^{t-1}\sum_{l=t-D}^m \| \gamma^{-l} X_l \|^2_{W_{t,\lambda}^{-1}} \\
    &= \sum_{l=t-D}^m \text{tr}( \gamma^{-l} X_l^T W_{t,\lambda}^{-1} X_l) \\
    & = \text{tr}\Big( W_{t,\lambda}^{-1} \sum_{l=t-D}^m \gamma^{-l} X_l X_l^T\Big) \\
    & \le \text{tr}\Big( W_{t,\lambda}^{-1} \sum_{l=t-D}^m \gamma^{-l} X_l X_l^T\Big) + \sum_{s=m+1}^{t-1} \gamma^{-l} X_l^T W_{t,\lambda}^{-1} X_l + \lambda\gamma^{-t} \sum_{i=1}^d e_i^T W_{t,\lambda}^{-1} e_i \\
    & = \text{tr}\Big( W_{t,\lambda}^{-1} \sum_{l=t-D}^m \gamma^{-l} X_l X_l^T\Big) + \text{tr}\Big( W_{t,\lambda}^{-1} \sum_{l=m+1}^{t-1} \gamma^{-l} X_l X_l^T\Big) +
    \text{tr}\Big( W_{t,\lambda}^{-1} \lambda\gamma^{-t} \sum_{i=1}^d e_i e_i^T \Big) \\
    & = \text{tr}(I_d) = d
\end{align*}
\end{proof}

Therefore, the expected dynamic regret is bounded as,
\begin{align*}
    E[R(T)] &\le (A) + (B) \\
    & \le (c_1+ c_2) \big( 1+ \frac{2}{p_3-p_2}\big) \sqrt{c_3 T } + T(p_1+p_2) +d + 2\sqrt{\frac{d}{\lambda}}D^{3/2} B_T  + \frac{4}{\lambda} \frac{\gamma^D}{1-\gamma}T
\end{align*}

In Corollary \ref{coro:non}, the choices of $a, c_1, c_2$, and $c_3$ are
\begin{align*}
    & a^2 = 14c_1^2, \;c_1 = \sqrt{2 \log T + d \log (1+ \frac{1- \gamma^{2(T-1)}}{
\lambda d(1-\gamma^2)})} + \lambda^{1/2}\\
    &c_2 = a\sqrt{2\log (T/2)},  \text{and} \; c_3 = 2d \log(1/\gamma) + 2\frac{d}{T} \log (1+\frac{1}{d\lambda(1-\gamma)} ).
\end{align*}
 With optimal choice of 
 $$
 D = \frac{\log T}{1-\gamma},\; \gamma = 1- d^{-\frac{1}{4}}B_T^{\frac{1}{2}}T^{-\frac{1}{2}},
 $$ 
 the dynamic regret of the D-RandLinUCB algorithm is asymptotically upper bounded by $\mathcal{O} ( d^{\frac{7}{8}} B_T^{\frac{1}{4}} T^{\frac{3}{4}})$ as $T \rightarrow \infty$ as $T \rightarrow \infty$. 

In Corollary \ref{coro:non2}, the choices of $a, c_1, c_2$, and $c_3$ are
\begin{align*}
    & a^2 = 14c_1^2, \;c_1 = \sqrt{2 \log T + d \log (1+ \frac{1- \gamma^{2(T-1)}}{
\lambda d(1-\gamma^2)})} + \lambda^{1/2}\\
    &c_2 = a\sqrt{2\log (KT/2)}, \text{and} \; c_3 = 2d \log(1/\gamma) + 2\frac{d}{T} \log (1+\frac{1}{d\lambda(1-\gamma)}).
\end{align*}
 With optimal choice of 
 $$
 D = \frac{\log T}{1-\gamma},\; \gamma = 1- d^{-\frac{1}{4}}(\log K)^{-\frac{1}{4}}B_T^{\frac{1}{2}}T^{-\frac{1}{2}},
 $$ 
 the dynamic regret of the D-LinTS algorithm is asymptotically upper bounded by \\
 $\mathcal{O} ( d^{\frac{7}{8}}(\log K)^{\frac{3}{8}} B_T^{\frac{1}{4}} T^{\frac{3}{4}})$ as $T \rightarrow \infty$. 
\end{proof}

\section{Adapting to unknown non-stationarity}\label{app:bob}
The optimal discounting factor $\gamma^{\star}$ requires prior information of non-stationarity measure $B_T$, which is unavailable in general. We make up for the lack of this information via running the EXP3 algorithm as a meta algorithm to adaptively choose the optimal discounting factor. This method of adapting to unknown non-stationarity is called as Bandits-over-Bandits (BOB) \citep{cheung2019hedging}.

The BOB mechanism divides the entire time horizon into $[T/H]$ blocks of equal length $H$ rounds, and specifies a set $J \subset [H]$ from which each critical window size $D_i$ is drawn from. For each block $i$, the BOB mechanism selects a critical window size $D_i$ and starts a new copy of D-RandLinUCB algorithm. On top of this, the BOB mechanism separately maintains the EXP3 algorithm to carefully control the selection of critical window size for each block, and the total reward of each block is used as bandit feedback for the EXP3 algorithm.

We set $H = d^{\frac{1}{4}} T^{\frac{1}{2}}$ and 
 we consider D-RandLinUCB algorithm together with BOB mechanism. The details for D-LinTS algorithm will be skipped since its dynamic regret bound can be obtained in a very similar fashion. With choices of parameters, 
\begin{align*}
    & a^2 = 14c_1^2, \;c_1 = \sqrt{2 \log T + d \log (1+ \frac{1- \gamma^{2(T-1)}}{
\lambda d(1-\gamma^2)})} + \lambda^{1/2}, \\
&c_2 = a\sqrt{2\log (T/2)}, \text{and} \;  c_3 = 2d \log(1/\gamma) + 2\frac{d}{T} \log (1+\frac{1}{d\lambda(1-\gamma)})
\end{align*}
the expected dynamic regret bound is bounded as
\begin{align}
    E[R(T)] & \le (c_1+ c_2) \big( 1+ \frac{2}{p_3-p_2}\big) \sqrt{c_3 T } + T(p_1+p_2) +d + 2\sqrt{\frac{d}{\lambda}}D^{3/2} B_T  + \frac{4}{\lambda} \frac{\gamma^D}{1-\gamma}T \\
    & \le \tilde{\mathcal{O}} \big( \sqrt{d}D^{3/2} B_T + \frac{dT}{\sqrt{D}} \big),  \label{eq:window}
\end{align}
where $D$ is called {\em critical window size} in the sense that the observations outside this critical window size would not affect the order of regret bound (only by constant instead). This quantity is closely related to discounting factor $\gamma$ in the following equation, $D = (\log T)/(1-\gamma)$. That is, to find the optimal discounting factor is equivalent to finding the optimal critical window size.

\begin{align*}
    \E [Regret_T(BOB)] =& \E [ \sum_{t=1}^T \langle x_t^{\star}, \theta_t \rangle - \sum_{t=1}^T \langle X_t, \theta_t \rangle] \\
    = & \underbrace{\E \big[ \sum_{t=1}^T \langle x_t^{\star}, \theta_t \rangle - \sum_{i=1}^{[T/H]} \sum_{t=(i-1)H+1}^{iH \wedge T} \langle X_t(D_{\dagger}), \theta_t \rangle \big]}_{(a)} \\
    & + \underbrace{\E \big[ \sum_{i=1}^{[T/H]} \sum_{t=(i-1)H+1}^{iH \wedge T} \langle X_t(D_{\dagger}) - X_t(D_i), \theta_t \rangle \big]}_{(b)} 
\end{align*}
where $D_{\dagger}$ is the best critical window size to approximate the optimal critical window size $D^{\star}$ in the pool $J = \{ H^0, [H^{\frac{1}{\Delta}}], [H^{\frac{2}{\Delta}}],\cdots, H\}$ for some positive integer $\Delta$, and we can set $H = [d^{\frac{1}{4}}T^{\frac{1}{2}}]$ and $\Delta = [\log H]$. Recall that $D^{\star} =  d^{\frac{1}{4}}B_T^{-\frac{1}{2}}T^{\frac{1}{2}}\log T$. It suffices to bound terms (a) and (b).

The term (a) is bounded using Equation \ref{eq:window},
\begin{align*}
    (a) & = \E \big[\sum_{i=1}^{[T/H]} \sum_{t=(i-1)H+1}^{iH \wedge T} \langle  x_t^{\star} - X_t(D_{\dagger}), \theta_t \rangle \big]\\
    & = \sum_{i=1}^{[T/H]} \tilde{\mathcal{O}}\big( \sqrt{d}D_{\dagger}^{3/2} B_T(i) + \frac{dH}{\sqrt{D_{\dagger}}} \big) \\
    & = \tilde{\mathcal{O}} \big( \sqrt{d}D_{\dagger}^{3/2} B_T + \frac{dT}{\sqrt{D_{\dagger}}} \big)
\end{align*}
where $B_T(i) = \sum_{t=(i-1)H+1}^{iH \wedge T-1} \|\theta_t - \theta_{t+1}\|$ is the total variation in block $i$.

Next, we bound the term (b) as below. The number of rounds in a block is $[T/H]$ and the number of possible options of $D_i$ is $|J| = \Delta +1$.
\begin{align*}
    (b) & \le \tilde{\mathcal{O}}(\sqrt{H |J| T})
\end{align*}
where this inequality follows by the same argument as in the sliding window based approach \citep{cheung2019learning}.

Combining term (a) and (b), the regret of the BOB mechanism is
$$
\E [Regret_T(BOB)] = \tilde{\mathcal{O}} \big( \sqrt{d}D_{\dagger}{}^{3/2} B_T + \frac{dT}{\sqrt{D_{\dagger}}}+ \sqrt{H |J| T} \big).
$$

\paragraph{Case 1 : $D^{\star} \le H$.} This condition implies that there exists $\epsilon_1$ and $\epsilon_2$ such that $B_T = \tilde{\Omega}(d^{\epsilon_1} T^{\epsilon_2})$ where at least one of $\epsilon_1$ and $\epsilon_2$ is positive, and thus $D_{\dagger}$ can automatically approximate to the nearly optimal critical window size $D^{\star}$. Then, the dynamic regret of the BOB mechanism becomes
\begin{align*}
    \E [Regret_T(BOB)] & = \tilde{\mathcal{O}} \big( \sqrt{d}D_{\dagger}{}^{3/2} B_T + \frac{dT}{\sqrt{D_{\dagger}}}+ \sqrt{H |J| T} \big) \\
    & = \tilde{\mathcal{O}} \big( \sqrt{d}D^{\star}{}^{3/2}H^{\frac{1}{\Delta}} B_T + \frac{dT}{\sqrt{D^{\star}H^{-\frac{1}{\Delta}}}}+ d^{\frac{1}{8}}T^{\frac{3}{4}}\Delta^{\frac{1}{2}} \big) \\
    & = \tilde{\mathcal{O}} \big( d^{\frac{7}{8}}B_T^{\frac{1}{4}}T^{\frac{3}{4}}+ d^{\frac{1}{8}}T^{\frac{3}{4}}\Delta^{\frac{1}{2}} \big) \\
    & = \tilde{\mathcal{O}} \big( d^{\frac{7}{8}}B_T^{\frac{1}{4}}T^{\frac{3}{4}} \big).
\end{align*}

\paragraph{Case 2 : $D^{\star} > H$.} This condition implies that $B_T = \tilde{\mathcal{O}}(1)$. Under this situation, $D_{\dagger}$ equals to $H$, which is the critical window size closest to $D^{\star}$, then 
the dynamic regret of the BOB mechanism becomes

\begin{align*}
    \E [Regret_T(BOB)] & = \tilde{\mathcal{O}} \big( \sqrt{d}D_{\dagger}{}^{3/2} B_T + \frac{dT}{\sqrt{D_{\dagger}}}+ \sqrt{H |J| T} \big) \\
    & = \tilde{\mathcal{O}} \big( \sqrt{d}H^{3/2} B_T + \frac{dT}{\sqrt{H}}+ \sqrt{H |J| T} \big)\\
   & = \tilde{\mathcal{O}} \big( d^{\frac{7}{8}}B_T T^{\frac{3}{4}}+ d^{\frac{7}{8}} T^{\frac{3}{4}} + d^{\frac{1}{8}}T^{\frac{3}{4}}\Delta^{\frac{1}{2}} \big) \\
    & = \tilde{\mathcal{O}} \big( d^{\frac{7}{8}}B_T^{\frac{1}{4}}T^{\frac{3}{4}} \big).
\end{align*}
The last inequality holds by the condition $B_T = \tilde{\mathcal{O}}(1)$.

\end{document}